\documentclass[letterpaper,11pt,reqno]{amsart}

\makeatletter
\usepackage{amssymb}
\usepackage{latexsym}
\usepackage{amsbsy}
\usepackage{amsfonts}
\usepackage{hyperref}
\usepackage{graphicx}
\usepackage{scalerel}
\usepackage{enumerate}
\usepackage{enumitem}
\usepackage{mathtools}
\usepackage{color}
\usepackage{subcaption}
\usepackage{booktabs}
\usepackage{threeparttable}

\usepackage{tikz}

\def\marginpar#1{\ignorespaces}

\DeclareMathOperator\erf{erf}

\newtheorem{theorem}{Theorem}[section]

\newtheorem{proposition}[theorem]{Proposition}

\numberwithin{equation}{section}
\makeatother
\begin{document}
\title[]{Tweedie's formulae and diffusion generative models beyond Gaussian}

\author[Wenpin Tang]{{Wenpin} Tang}
\address{Department of Industrial Engineering and Operations Research, Columbia University.
} \email{wt2319@columbia.edu}

\author[Nizar Touzi]{{Nizar} Touzi}
\address{Department of Finance and Risk Engineering, New York University.
} \email{nt2635@nyu.edu}

\author[Zikun Zhang]{{Zikun} Zhang}
\address{Department of Industrial Engineering and Operations Research, Columbia University.
} \email{zz3367@columbia.edu}

\author[Xun Yu Zhou]{{Xun Yu} Zhou}
\address{Department of Industrial Engineering and Operations Research, Columbia University.
} \email{xz2574@columbia.edu}

\date{\today}
\begin{abstract}
Diffusion models have achieved remarkable success in generating samples from unknown data distributions. Most popular stochastic differential equation–based diffusion models perturb the target distribution by adding Gaussian noise, transforming it into a simple prior, and then use denoising score matching, a consequence of Tweedie’s formula, to learn the score function and generate clean samples from noise. However, non-Gaussian diffusion models with state-dependent diffusion coefficient have been largely underexplored, as have the corresponding Tweedie’s formulae. In this work, we extend Tweedie’s formula to important non-Gaussian processes, including geometric Brownian motion (GBM), squared Bessel (BESQ) processes, and Cox-Ingersoll-Ross (CIR) processes, thereby yielding the corresponding denoising score-matching objectives.
We then apply the derived formulae to image and financial time series generation using GBM- and CIR-based diffusion models, and to empirical Bayes estimation under the BESQ setting.
The reported experimental results demonstrate the potential of non-Gaussian models. 

\end{abstract}

\maketitle
\textit{Key words}: Bessel processes, denoising score matching, diffusion models, empirical Bayes, financial time series, geometric Brownian motion, Tweedie's formula.

\section{Introduction}

\quad Diffusion models are a family of generative models
that
genuinely create samples from unknown  target distributions \cite{Ho20, Song19, Song20}.
They underpin the recent success
in text-to-image creators
such as DALL·E 2 \cite{ramesh2022hierarchical}, Stable Diffusion \cite{Rombach22}, and Flux \cite{Flux},
in text-to-video generators such as Sora \cite{Sora}, Make-A-Video \cite{Singer22}, Seedance \cite{gao2025seedance}, and Veo \cite{Veo},
and in diffusion large language models
such as Mercury \cite{KK25}, LLaDA \cite{Nie25}, Dream \cite{YX25}, and WeDLM \cite{liu2025wedlm}.
Recently, diffusion models have also been applied to other fields, including operations research \cite{LZ24, ZY26}
and finance \cite{AB25, GZZ25, guo2025diffusion} for tabular data generation.

\quad The idea of diffusion generative models relies on a forward--backward procedure:
\begin{itemize}[itemsep = 3 pt]
\item
{\em Forward process}:
starting from a training sample of the target distribution $X_0 \sim p_{\mathrm{data}}(\cdot)$,
the model gradually adds noise
to transform the signal into noise
$X_0 \to  \cdots \to X_T \sim p_{\mathrm{noise}}(\cdot)$.
\item
{\em Backward process}:
 start with the noise $X_T \sim p_{\mathrm{noise}}(\cdot)$,
 and reverse the forward process
 to recover the signal from noise
 $X_T \to \cdots \to X_0 \sim p_{\mathrm{data}}(\cdot)$.
\end{itemize}
The backward process is also termed as diffusion generative sampling or inference.
In this paper, we adopt the continuous-time formulation,
where the forward process $\{X_t\}_{0 \le t \le T}$ is governed by a stochastic differential equation (SDE):
\begin{equation*}
\mathrm{d}X_t = b(t, X_t) \,\mathrm{d}t + \sigma(t, X_t) \,\mathrm{d}W_t, \quad X_0 \sim p_{\mathrm{data}}(\cdot).
\end{equation*}
As will be detailed in Section \ref{sc2},
the key to diffusion generative sampling hinges on the  {\em score function} \footnote{In other words, the score function is the logarithmic derivative of the probability density of $X_t$.}:
\begin{equation*}
\nabla \log p(t,x): = \nabla \log \left( \frac{\mathrm{d}}{\mathrm{d}x} \mathbb{P}(X_t \in \mathrm{d}x) \right).
\end{equation*}
Learning or estimating this score function is often referred to as {\em score-matching}.

\quad For most existing diffusion generative models,
e.g., variance exploding (VE) and variance preserving (VP) models \cite{Song20},
the diffusion coefficient $\sigma(t,x) = \sigma(t) I$ is only time-dependent,
and the drift parameter $b(t,x) = f(t) x$ is linear in the state (or space) variable.\footnote{For the VE model, $b(t,x) = 0$ and $\sigma(t, x) = \sigma(t) I$,
with $\sigma(t) = a\sqrt{t}$ or $a b^t$ for some $a, b  > 0$.
For  the VP model, $b(t,x) = -\alpha(t) x$ and $\sigma(t,x) = \sqrt{2\alpha(t)} I$,
with $\alpha(t) = a t + b$ for some $a, b > 0$. In both cases, while the state $X_t$ is typically high-dimensional, noises are independently added to each component of $X_t$. }
In this case, the process $\{X_t\}_{0 \le t \le T}$ is Gaussian,
and there is a systematic way to learn the score function $\nabla \log p(t,x)$
via {\em Tweedie's formula} \cite{Efron11, Robbins56}.

\quad Now we briefly explain Tweedie's formula,
which first appeared in correspondence between Herbert Robbins and Maurice Tweedie,
and was rediscovered in various contexts \cite{Mas75, Miya61, PS92, Pol91}.\footnote{The formula is also known as Eddington--Tweedie formula \cite{IS25} or Masreliez's theorem \cite{Mas75}.}
It was later popularized by Bradley Efron in the context of empirical Bayes estimation to tackle selection bias in genome analysis \cite{Efron11, EZ11}; see \cite{IS25} for comprehensive discussions of empirical Bayes and Tweedie's formula.
Let
\begin{equation}
\label{eq:prior}
U \sim \nu(\cdot) \quad \mbox{and} \quad V\,|\,U \sim  \mathcal{N}(U, \sigma^2 I),
\end{equation}
where $\mathcal{N}(\mu, \Sigma)$ denotes a Gaussian random vector
with mean $\mu$ and covariance matrix $\Sigma$.
Denote by $p_V(\cdot)$ the (marginal) distribution of $V$. Then Tweedie's formula yields
\begin{equation}
\label{eq:TweedieGau}
\mathbb{E}(U \,| \, V = v) = v + \sigma^2 \nabla \log p_V(v).
\end{equation}
Notably, the equation \eqref{eq:TweedieGau} does not involve the ``prior" $\nu(\cdot)$,
and can be used to learn the score function of Gaussian diffusion models.
To illustrate, consider the VE model $\mathrm{d}X_t = \sigma(t) \,\mathrm{d} W_t$, $X_0 \sim p_{\mathrm{data}}(\cdot)$.
Specializing to $U = X_0$ and $V = X_t$ (so $\nu(\cdot)$ and $\sigma^2$ in \eqref{eq:prior} are $p_{\mathrm{data}}(\cdot)$ and $\int_0^t \sigma^2(s)\,\mathrm{d}s$, respectively) yields
\begin{equation}
\label{eq:VETweedie}
\nabla \log p_{\mathrm{VE}}(t,X_t) = \frac{\mathbb{E}(X_0 \,|\, X_t) - X_t }{\int_0^t \sigma^2(s)\,\mathrm{d}s} \quad a.s.,
\end{equation}
under suitable integrability conditions on $X_0 \sim p_{\mathrm{data}}(\cdot)$.
Next, we sample $(X_0, X_t)$ according to
$X_0 \sim p_{\mathrm{data}}(\cdot)$,
$X_t \, |\, X_0 \sim \mathcal{N}\left(X_0, (\int_0^t \sigma^2(s)\,\mathrm{d}s)I \right)$,
and regress $X_0$ over $X_t$ to learn the score function
because $\mathbb{E}(X_0 \,|\, X_t)$ is the $L^2$ projection of $X_0$ over $X_t$ (see \cite{SH19} for related discussions).
More precisely, letting $\{s^{\mathrm{VE}}_\theta(\cdot,\cdot)\}_\theta$ be a parametrized family approximating the score function,
we aim to solve the problem:
\begin{equation*}
\min_\theta \mathbb{E}_{t \sim \mathcal{U}[0,T], \, (X_0, X_t)} \left| s^{\mathrm{VE}}_\theta(t,X_t) + \frac{X_t - X_0}{\int_0^t \sigma^2(s) \,\mathrm{d}s }\right|^2,
\end{equation*}
where $\mathcal{U}[0,T]$ denotes the uniform distribution over $[0,T]$,
and $(X_0, X_t)$ is sampled as described.
The optimization problem is known as {\em denoising score-matching},
which a priori chooses the $L^2$ loss to achieve
$\mathbb{E}_{t \sim \mathcal{U}[0,T], \, X_t}|s^{\mathrm{VE}}_{\theta_*}(t,X_t) - \nabla \log p_{\mathrm{VE}}(t,X_t)|^2 \approx 0$
if $\{s^{\mathrm{VE}}_\theta(\cdot,\cdot)\}_\theta$ is sufficiently rich.\footnote{Denoising score-matching requires to solve:
\begin{equation*}
    \min_\theta \mathbb{E}_{t \sim \mathcal{U}[0,T], \, (X_0, X_t)} \left| s_\theta(t,X_t) - \nabla \log p(t,X_t \,|\, X_0) \right|^2,
\end{equation*}
which can be shown to be equivalent to the problem
$\min_\theta \mathbb{E}_{t \sim \mathcal{U}[0,T], \, X_t} \left| s_\theta(t,X_t) - \nabla \log p(t,X_t) \right|^2$ \cite{tang2024score, Vi11}.
While Tweedie's formula provides a theoretical justification for denoising score-matching, the reverse is not true: denoising score-matching does not inherently imply Tweedie's formula.
Another consequence is the sample complexity of denoising score-matching: 
the classical result \cite{Stone80, Stone82} shows that if $\nabla \log p(t,x)$ is $p$-times differentiable,
then 
$\mathbb{E}_{t \sim \mathcal{U}[0,T], \, X_t}|s_{\theta_*}(t,X_t) - \nabla \log p(t,X_t)|^2 \lesssim m^{-\frac{2p}{2p+d}}$,
where $m$ is the number of samples.
Refer to \cite{DF25, FGL25, Oko23} for sharper results on diffusion score estimation via low-dimensional adaptation.}
It is worth noting that Tweedie’s formula suggests that the $L^2$ loss is a natural choice for denoising score matching, since the conditional expectation corresponds to the optimal $L^2$ regression estimator.
Tweedie's formula and the corresponding denoising score-matching for other diffusion models
with $\sigma(t,x) = \sigma(t) I$ and $b(t,x) = f(t) x$ can be derived similarly.
In fact, the score function of any such model can be obtained from that of the VE model with $\sigma(t) = \sqrt{2 t}$
by a space--time reparametrization \cite{Karras22}; see also \cite[Section 5.1]{tang2024score} and \cite[Section 4.2]{TZ24}.

\medskip
{\bf Contributions.} As discussed earlier, Tweedie’s formula~\eqref{eq:TweedieGau} is formulated for Gaussian distributions, and
denoising score matching for existing diffusion models predominantly relies on their Gaussian structure, which enables the direct application of Tweedie’s formula. However, using non-Gaussian models, such as (time-dependent) geometric Brownian motion (GBM) and Bessel-type processes, may be advantageous in certain generative AI tasks, which calls for a study on Tweedie’s formula and the corresponding denoising score matching beyond Gaussian,
especially for those with state-dependent diffusion coefficients.
The main contribution of this paper is to carry out this study. 
Taking the one-dimensional setting for instance,
the key to generalize Tweedie's formula is the following simple observation (see Proposition \ref{eq:key2}):
\begin{equation}
\label{eq:key}
\sigma^2(t,x) \nabla \log p (t,x) + 2 \sigma(t,x) \partial_x \sigma(t,x)= b(t,x) + \lim_{\varepsilon \to 0} \frac{1}{\varepsilon} \mathbb{E}\left(X_{t - \varepsilon} - X_t \,|\, X_t = x\right).
\end{equation}
The identity \eqref{eq:key} enables us to derive Tweedie’s formula for non-Gaussian diffusion models with state-dependent diffusion coefficients and, consequently, the corresponding denoising score-matching objectives. Table~\ref{tab:tweedie} summarizes explicit Tweedie's formulae for several non-Gaussian models we derive in this paper which, to our best knowledge, is novel.\footnote{As mentioned earlier, as noises are added component-wise to a multi-dimensional diffusion process, we only need those formulae in a scalar space.} Notably, we emphasize that the flexible choice of time-dependent drift and diffusion coefficients is crucial to the empirical success of the corresponding diffusion models. In particular, specializing to $t = 1$ provides further extensions to  Efron's generalization of Tweedie's formula to the exponential family
with linear sufficient statistics and in the canonical form \cite{Efron11}.\footnote{The exponential family is of the form $p(V \,|\, U) = h(V) \exp(T(V) \eta(U)-A(U))$, where $T(V)$ is the sufficient statistics.
It is said to be in the canonical form if $\eta(U) = U$. Efron \cite{Efron11} generalizes Tweedie's formula to the exponential family with $T(V) \propto V$ and $\eta(U) = U$, which however does not include GBM and Bessel processes.}
As for applications, we use non-Gaussian models
(with score matching via Tweedie's formula)
to perform image generation, financial time series generation, and empirical Bayes estimation,
which have been largely underexplored in prior work.

\begin{table}[!h]
\caption{Tweedie's Formulae for Gaussian and Non-Gaussian Models}
\label{tab:tweedie}
\centering
\renewcommand{\arraystretch}{1.5}
\setlength{\tabcolsep}{5pt}
\resizebox{\textwidth}{!}{
\begin{threeparttable}
\begin{tabular}{l l l l}
\toprule
       Process & $b(t,x)$ & $\sigma(t,x)$ & Score function $\nabla \log p(t,x)$ \\
       \midrule
       VE   & $0$ & $\sigma(t)$ & $ \Sigma^{-2}(t)(\mathbb{E}(X_0\,|\, X_t =x) - x)$ \\
       VP  & $-\alpha(t)x$ & $\sqrt{2\alpha(t)}$ & $(1-e^{-2A(t)})^{-1} ( e^{-A(t)}\, \mathbb{E}(X_0 \,|\, X_t = x) -x)$ \\
      GBM   & $\mu(t) x$ & $\sigma(t) x$ & $\left(\frac{U(t)}{\Sigma^2(t)} - \frac{3}{2} \right)\frac{1}{x} - \frac{1}{\Sigma^2(t)}\frac{\log x}{x} + \frac{1}{\Sigma^2(t)} \frac{1}{x}\mathbb{E}(\log X_0 \,|\, X_t = x)$  \\
      BESQ   & $2(\nu+1)$ & $2\sqrt{x}$ & $\frac{\nu}{x} -\frac{1}{2t} + \frac{1}{2t \sqrt{x}}\mathbb{E}(\sqrt{X_0} I_{\nu+1}(\frac{\sqrt{xX_0}}{t})/I_{\nu}(\frac{\sqrt{xX_0}}{t}) \,|\, X_t = x)=:s^\mathrm{BESQ}_\nu(t,x)$ \\
      CIR & $\alpha(t)(\mu(t)-x)$ & $\sigma(t)\sqrt{x}$ & $e^{A(t)} s_{\frac{2 \alpha(t) \mu(t)}{\sigma^2(t)} - 1(\equiv \nu)}^{\mathrm{BESQ}}\left(\frac{1}{4}\int_0^t \sigma^2(s)e^{A(s)}\,\mathrm{d}s,  e^{A(t)} x\right)$  \\
      CEV   &  $\mu(t) x$ & $\sigma(t) x^\beta$ & $\frac{-2 \beta + 1}{x}
-\frac{2(\beta - 1) e^{2 (\beta- 1)U(t)}}{x^{2 \beta - 1}}s^{\mathrm{BESQ}}_{\frac{1}{2(\beta - 1)}}\left((\beta-1)^2\int_0^t \sigma^2(s)e^{2(\beta-1)U(s)}\,\mathrm{d}s, e^{2 (\beta- 1)U(t)} x^{-2 (\beta - 1)}\right)$ \\
      BES(3)  & $1/x$ & $1$ & $\frac{1}{x} + \frac{1}{t} \mathbb{E}( (X_0 - x) \coth(\frac{x X_0}{t} ) \,|\, X_t = x )$ \\
      \bottomrule
    \end{tabular}
    \end{threeparttable}
    }
    \begin{tablenotes}
\footnotesize
\item Here, for all $t\geq 0$, $\Sigma^2(t):=\int_0^t \sigma^2(s)\,\mathrm{d}s$, $A(t):=\int_0^t \alpha(s)\,\mathrm{d}s$, and $U(t):=\int_0^t \mu(s)\,\mathrm{d}s$.
\item $I_\nu(\cdot)$ denotes the modified Bessel function of the first kind of order $\nu$.
\item
VE and VP are Gaussian models;
GBM, BESQ, CIR, CEV and BES(3) are (positive) non-Gaussian models.
\end{tablenotes}
\end{table}

\medskip
{\bf Related Works.} SDE/score-based continuous diffusion models are formally introduced in \cite{Song20}. Their empirical success across various applications relies on the careful design of the drift and diffusion coefficients in the forward SDE. State-of-the-art VE and VP models \cite{Karras22, Song20} are Gaussian-based: they progressively add Gaussian noise to the original data distribution, use state-{\it independent} diffusion coefficients for simplicity and numerical stability, and generate the reverse sampling process from a Gaussian noise distribution.

\quad Non-Gaussian SDE-based diffusion models are often associated with modeling discrete or categorical data on the probability simplex. \cite{richemond2022categorical} proposes a multi-dimensional CIR process as the forward noising process for simplex diffusions, which leads to a Dirichlet prior after normalizing the limiting multivariate Gamma distribution.
Though the authors note that the CIR process is well suited for simplex diffusions over categorical data due to its positivity and the existence of a limiting distribution and comment on numerical simulations, they do not actually conduct numerical experiments. \cite{floto2023diffusion} applies an additive logistic transformation to the Ornstein--Uhlenbeck process to construct a positive forward SDE for simplex diffusions, derives the corresponding score-matching objective, and demonstrates their approach using MNIST images with pixels discretized into three categories. \cite{avdeyev2023dirichlet} proposes a Dirichlet diffusion score model for discrete and categorical data by constructing a multivariate diffusion process on the probability simplex. The forward process converges to a Dirichlet distribution and is built from independent univariate Jacobi diffusion processes via a stick-breaking construction. 
Inspired by Dale's law, 
\cite{Shetty25} considers GBM with constant coefficients
as a multiplicative forward process to model
non-negative data and proposes a new multiplicative score-matching loss to train
the model, showing the promise and applicability of
the non-Gaussian models to datasets and domains where multiplicative noise is preferred. Their multiplicative score-matching loss for GBM differs from ours in that it is obtained by directly multiplying the state variable with the classical denoising score-matching loss while we focus on deriving such objectives for a broader class of non-Gaussian models in a systematic way.
\cite{kim2025diffusion} puts forward  a diffusion-based generative framework for financial time series that incorporates GBM into the forward process in the price space. Although the authors observe that, under a balance between the drift and diffusion coefficients, the model reduces to additive Gaussian noise injection (i.e, a VE formulation) in the log-price space, they do not derive the corresponding score matching objectives for general GBM-based diffusion models.

\quad Our primary goal is to derive Tweedie's formulae as denoising score matching objectives for various (important) diffusion models beyond Gaussian.
That is, we express the score function $\nabla \log p(t,x)$ in terms of the conditional expectation $\mathbb{E}(g(t,x, X_0)\,|\, X_t = x)$ for some explicit function $g$.
Upon the completion of this paper, we noted a recent preprint \cite{Torres26}
that calculated $\mathbb{E}(X_0 \,|\, X_t =x)$ for additive models beyond Gaussian
via some transformations (such as differentiation and integration)  of the density function $p(t,x)$.
This type of  ``Tweedie calculus"  (see also \cite{GK17, SF19} for special cases) is different from our results because
 it only applies to additive models whereas GBM and Bessel processes are not additive, and 
 it goes ``the other way around" by seeking an expression involving the density function for the conditional expectation
rather than seeking a conditional expectation representation for the score function.

\medskip
{\bf Organization of the Paper.} The remainder of the paper is organized as follows.
Section~\ref{sc2} provides background on diffusion models, where a proof of \eqref{eq:key} is given.
In Section~\ref{sc3}, we derive Tweedie's formulae for various non-Gaussian diffusion models,
 including GBM and (squared) Bessel processes.
Numerical experiments are reported in Section~\ref{sc4}.
We conclude in Section~\ref{sc5}. Additional results on GBM-based Bayes estimation and experiments specifics are placed in the appendix.

\section{Diffusion models and Tweedie's formulae}
\label{sc2}

\quad This section provides background on diffusion models,
and presents a simple yet general approach to derive Tweedie's formulae in the context of diffusion models.
We follow closely the presentation of \cite{tang2024score}.

 \quad Consider a forward SDE:
\begin{equation}
\label{eq:SDE}
\mathrm{d}X_t = b(t, X_t) \,\mathrm{d}t + \sigma(t,X_t) \,\mathrm{d}W_t, \quad X_0 \sim p_{\mathrm{data}}(\cdot),
\end{equation}
where $\{W_t\}_{t \ge 0}$ is $n$-dimensional Brownian motion,
and $b: \mathbb{R}_+ \times \mathbb{R}^d \to \mathbb{R}^d$
and $\sigma: \mathbb{R}_+ \times \mathbb{R}^d \to \mathbb{R}^{d \times n}$
are drift and diffusion coefficients, respectively.
Some conditions on $b(\cdot, \cdot)$ and $\sigma(\cdot, \cdot)$ are required to
ensure that \eqref{eq:SDE} is well-posed (i.e. having existence and uniqueness of solution);
see standard textbooks e.g., \cite{KS91, SV79} for details about the well-posedness of SDEs.

\quad For ease of presentation,
we assume that $X_t$ has a (suitably smooth)
probability density function $p(t, \cdot)$.
The following theorem gives the time-reversal of the SDE \eqref{eq:SDE},
which lays the foundation of diffusion generative models.

\begin{theorem}
\cite{Ander82, HP86}
\label{thm:SDErev}
Denote by $a(t,x):= \sigma(t,x) \sigma(t,x)^\top$.
Under suitable conditions on $b(\cdot, \cdot)$, $\sigma(\cdot, \cdot)$ and $\{p(t, \cdot)\}_{0 \le t \le T}$,
we define
\begin{equation*}
\overline{\sigma}(t,x) = \sigma(T-t, x), \quad
\overline{b}(t,x) = -b(T-t, x) + \frac{\nabla \cdot (p(T-t,x) a(T-t,x))}{p(T-t, x)},
\end{equation*}
as well as the process $\{Y_t\}_{0 \le t \le T}$ by
\begin{equation*}
\mathrm{d} Y_t = \bar{b}(t, Y_t) \,\mathrm{d}t + \bar{\sigma}(t, Y_t ) \,\mathrm{d} B_t, \quad Y_0 \sim p(T, \cdot),
\end{equation*}
where $\{B_t\}_{0 \le t \le T}$ is a copy of Brownian motion.
Then $\{Y_t\}_{0 \le t \le T}$ and $\{X_{T-t}\}_{0 \le t \le T}$ have the same marginal distribution,
i.e., $Y$ is the time reversal of $X$ in law.
\end{theorem}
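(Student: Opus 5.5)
The plan is to prove the statement at the level of one-dimensional marginals via the Fokker--Planck equation, which is all the theorem asserts. Let $q(t,x) := p(T-t,x)$. The first step is to check that $q$ solves the Kolmogorov forward equation associated with the coefficients $(\bar b, \bar\sigma)$. Once this is done, the conclusion follows from uniqueness of solutions to that equation with initial datum $q(0,\cdot) = p(T,\cdot)$, which is the law of $Y_0$. Uniqueness is understood within the class of probability-density-valued solutions guaranteed by the ``suitable conditions'' of the statement. Indeed, the law of $Y_t$ also solves the same forward equation, so the two families of marginals coincide.

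The computation goes as follows. The density $p$ of $X_t$ solves
\begin{equation*}
\partial_t p = -\nabla\cdot(b\,p) + \tfrac12 \sum_{i,j}\partial_{ij}(a_{ij}\,p).
\end{equation*}
Hence, writing $\tilde b(t,x):=b(T-t,x)$ and $\tilde a(t,x):=a(T-t,x)$,
\begin{equation*}
\partial_t q = \nabla\cdot(\tilde b\, q) - \tfrac12 \sum_{i,j}\partial_{ij}(\tilde a_{ij}\, q).
\end{equation*}
Next I compute the right-hand side of the forward equation for $(\bar b,\bar a)$, where $\bar a = \bar\sigma\bar\sigma^\top = \tilde a$. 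We have $\bar b\, q = -\tilde b\, q + \nabla\cdot(\tilde a q)$, with the divergence taken row-wise, i.e.\ $(\nabla\cdot(\tilde a q))_i = \sum_j \partial_j(\tilde a_{ij} q)$. Therefore
\begin{equation*}
-\nabla\cdot(\bar b q) + \tfrac12\sum_{i,j}\partial_{ij}(\tilde a_{ij} q) = \nabla\cdot(\tilde b q) - \sum_{i,j}\partial_{ij}(\tilde a_{ij} q) + \tfrac12\sum_{i,j}\partial_{ij}(\tilde a_{ij} q),
\end{equation*}
which equals $\partial_t q$. The key algebraic point is that the score-type correction in $\bar b$ contributes a full second-order term, and this flips the sign of the diffusion term produced by time reversal. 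Note that the ratio $\nabla\cdot(pa)/p$ appears multiplied by $q$, so only the divergence itself matters. Consequently, no problem arises where $p$ vanishes, provided $\bar b$ is defined there consistently.

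The main obstacle is justifying the final uniqueness step rigorously, together with the fact that the law of $Y_t$ has a density solving the forward equation. Both require regularity: smoothness of $p$ so that the PDE holds classically, and well-posedness of the reverse SDE, whose drift $\bar b$ may be singular near $t=T$ or where $p$ is small. I would handle this by invoking the assumed conditions. Specifically, I would first verify that the law of $Y_t$ is a weak (distributional) solution of the forward equation; this follows by applying It\^o's formula to test functions. I would then appeal to uniqueness of measure-valued solutions of the Fokker--Planck equation under local boundedness or Lipschitz assumptions on the coefficients, for instance via a duality argument with the backward Kolmogorov equation. If a statement about the full path law were desired, as in \cite{HP86}, one would instead show that the reversed process solves the martingale problem for $(\bar b,\bar a)$ via integration by parts against $p$. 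However, the marginal statement as phrased only needs the PDE argument above.
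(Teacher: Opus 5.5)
\textbf{Verdict: there is a gap.} Your Fokker--Planck computation is correct, and it proves $\mathrm{Law}(Y_t)=\mathrm{Law}(X_{T-t})$ for each fixed $t$. The correction $\nabla\cdot(\tilde a q)$ in $\bar b q$ contributes $-\sum_{i,j}\partial_{ij}(\tilde a_{ij}q)$, which flips the sign of the diffusion term, and uniqueness for the forward equation then closes the argument. If ``same marginal distribution'' is read as one-time marginals, which is all that sampling $Y_T\sim p_{\mathrm{data}}$ needs, this is a valid and more elementary argument. But the statement goes on to say that $Y$ is the time reversal of $X$ \emph{in law}. That means equality of the laws of the processes $(Y_t)_{t\le T}$ and $(X_{T-t})_{t\le T}$. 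This path-level result is what the cited Anderson and Haussmann--Pardoux theorems give, and you explicitly discard it.

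The paper needs the path-level version. In proving \eqref{eq:key2}, it identifies $\mathbb{E}(Y_{T-t+\varepsilon}-Y_{T-t}\mid Y_{T-t}=x)$ with $\mathbb{E}(X_{t-\varepsilon}-X_t\mid X_t=x)$, which requires $(Y_{T-t},Y_{T-t+\varepsilon})\stackrel{d}{=}(X_t,X_{t-\varepsilon})$. Your check cannot deliver this. The probability-flow ODE $\mathrm{d}Y_t=\big(-b+\tfrac12\nabla\cdot(pa)/p\big)(T-t,Y_t)\,\mathrm{d}t$ also has $q$ as its marginal flow, yet its conditional increments have a different drift. So your argument shows that $Y$ is \emph{a} process with the right marginals. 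It does not show that $X_{T-\cdot}$ itself evolves according to $(\bar b,\bar\sigma)$, which is the substance of the theorem.

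The missing step is the duality argument you mention only in passing. Set $\bar X_u:=X_{T-u}$ and $\bar L_u f:=\bar b(u,\cdot)\cdot\nabla f+\tfrac12\bar a(u,\cdot):\nabla^2 f$. For $r_1<r_2$, $f\in C_c^\infty$ and bounded $g$, let $h(r,x):=\mathbb{E}(g(X_{r_2})\mid X_r=x)$; it solves $\partial_r h+L_rh=0$, where $L_r$ is the forward generator. Define $\Phi(r):=\mathbb{E}(f(X_r)g(X_{r_2}))=\int fhp\,\mathrm{d}x$. Differentiate $\Phi$, using the forward equation for $p$ and the identity $L_r(fh)=fL_rh+hL_rf+\nabla f\cdot a\nabla h$. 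Then integrate the cross term by parts:
\begin{equation*}
\int p\,\nabla f\cdot a\nabla h\,\mathrm{d}x=-\int h\big(\nabla\cdot(pa)\cdot\nabla f+p\,a:\nabla^2 f\big)\,\mathrm{d}x .
\end{equation*}
This gives exactly $\Phi'(r)=-\int p\,h\,\bar L_{T-r}f\,\mathrm{d}x$, which in reversed time reads
\begin{equation*}
\mathbb{E}\Big(\big(f(\bar X_t)-f(\bar X_s)-\textstyle\int_s^t(\bar L_uf)(\bar X_u)\,\mathrm{d}u\big)\,g(\bar X_s)\Big)=0,\qquad s<t .
\end{equation*}
Since $\bar X$ is Markov, this makes $\bar X$ a solution of the martingale problem for $(\bar b,\bar a)$ started from $p(T,\cdot)$. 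Uniqueness for that martingale problem then yields equality of path laws with $Y$; this is where the ``suitable conditions'' enter, e.g.\ the local $H^1$ regularity of $p$ assumed by Haussmann--Pardoux. Your PDE computation is precisely the special case $g\equiv 1$ of this identity.
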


\quad As mentioned earlier,
the high-level idea of diffusion models is to recreate samples of 
the hidden target distribution from {\em noise}.
However, the initialization $Y_0 \sim p(T, \cdot)$ depends on the {\it unknown}
$p_{\mathrm{data}}(\cdot)$ in each sample generation.
One way to resolve this issue
is to replace the initialization $Y_0 \sim p(T, \cdot)$
with some noise $p_{\mathrm{noise}}(\cdot)$:
\begin{equation}
\label{eq:SDErev}
\mathrm{d} Y_t = \bar{b}(t, Y_t) \,\mathrm{d}t + \bar{\sigma}(t, Y_t ) \,\mathrm{d} B_t, \quad Y_0 \sim p_{\mathrm{noise}}(\cdot).
\end{equation}
The choice of $p_{\mathrm{noise}}(\cdot)$ is model-specific.
For instance, $p_{\mathrm{noise}}(\cdot)$ is taken as
$\mathcal{N}\left(0, (\int_0^T \sigma^2(s)\,\mathrm{d}s)I \right)$ for the VE model,
and $\mathcal{N}(0, I)$ for the VP model.
It is expected that the closer the distributions $p(T, \cdot)$ and $p_{\mathrm{noise}}(\cdot)$ are,
the closer the distribution of $Y_T$ sampled from \eqref{eq:SDErev} is to $p_{\mathrm{data}}(\cdot)$; see \cite{benton2024nearly, chen2023improved, chen2023sampling, lee2022convergence, LW23} and \cite[Section 6]{tang2024score} along with  references therein for the convergence theory of diffusion models.

\quad Since $b(\cdot, \cdot)$ and $\sigma(\cdot, \cdot)$ are chosen in advance,
all but the term $\nabla \log p(T-t, Y_t)$ in \eqref{eq:SDErev} are available.
So in order to sample the backward process \eqref{eq:SDErev},
we need to learn or estimate the score function $\nabla \log p(t,x)$,
known as {\it score-matching}, via a parameterized family $\{s_\theta(t,x)\}_\theta$.
There are several existing score-matching methods,
among which the most widely used one is denoising score matching \cite{Hyv05, Vi11}.
As explained in the introduction,
denoising score-matching is essentially equivalent to Tweedie's formula for Gaussian models.
However, Tweedie's formulae for non-Gaussian processes are absent,
leaving out important examples such as GBM and Bessel processes.
The goal of this paper is to develop a systematic approach to generalize Tweedie's formula to
include a wider class of diffusion models for potential applications,
premised upon the following result. 

\begin{proposition}
Under suitable conditions on $b(\cdot, \cdot)$, $\sigma(\cdot, \cdot)$ and $\{p(t, \cdot)\}_{0 \le t \le T}$,
we have for almost every $x$,
\begin{equation}
\label{eq:key2}
a(t, x) \nabla \log p(t,x) + \nabla \cdot a(t,x) = b(t,x) + \lim_{\varepsilon \to 0} \frac{1}{\varepsilon} \mathbb{E}\left(X_{t - \varepsilon} - X_t \,|\, X_t = x\right).
\end{equation}
\end{proposition}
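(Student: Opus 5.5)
The plan is to read the identity \eqref{eq:key2} as a statement about the drift of the time-reversed process, and to get it from Theorem \ref{thm:SDErev}. Fix $t \in (0,T]$ and set $s = T - t$. By Theorem \ref{thm:SDErev}, the process $Y_u := X_{T-u}$ is (in law, and in fact as a Markov diffusion under the standard hypotheses of \cite{HP86}) a diffusion with drift
\begin{equation*}
\bar b(u,y) = -b(T-u,y) + \frac{\nabla\cdot(p(T-u,y)a(T-u,y))}{p(T-u,y)}.
\end{equation*}
Expanding the divergence by the product rule gives
\begin{equation*}
\frac{\nabla\cdot(p\,a)}{p} = a\nabla\log p + \nabla\cdot a,
\end{equation*}
so $\bar b(s,x) + b(t,x) = a(t,x)\nabla\log p(t,x) + \nabla\cdot a(t,x)$. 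This is exactly the left-hand side of \eqref{eq:key2} minus $b(t,x)$. Hence it remains to show that
\begin{equation*}
\bar b(s,x) = \lim_{\varepsilon\to 0}\frac{1}{\varepsilon}\,\mathbb{E}\left(X_{t-\varepsilon}-X_t \,|\, X_t = x\right).
\end{equation*}

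Since $Y_s = X_t$ and $Y_{s+\varepsilon} = X_{t-\varepsilon}$, this amounts to the infinitesimal-drift characterization
\begin{equation*}
\lim_{\varepsilon\to0}\frac{1}{\varepsilon}\mathbb{E}(Y_{s+\varepsilon}-Y_s\,|\,Y_s=x) = \bar b(s,x).
\end{equation*}
Here I need the full Markov statement of time reversal (Haussmann--Pardoux): the reversed process solves the SDE with coefficients $(\bar b,\bar\sigma)$, not merely matching marginals. Writing
\begin{equation*}
Y_{s+\varepsilon}-Y_s=\int_s^{s+\varepsilon}\bar b(u,Y_u)\,\mathrm{d}u+\int_s^{s+\varepsilon}\bar\sigma(u,Y_u)\,\mathrm{d}B_u,
\end{equation*}
the stochastic integral has zero conditional expectation given $Y_s$, because $B$ is a Brownian motion in the reversed filtration and $Y_s$ is measurable at time $s$. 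Then $\frac1\varepsilon\mathbb{E}(\int_s^{s+\varepsilon}\bar b(u,Y_u)\mathrm{d}u\,|\,Y_s=x)\to\bar b(s,x)$ by continuity of $\bar b$ and of paths, together with dominated convergence.

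An alternative route, which avoids the full reversal theorem, is a weak-form computation. For a test function $\varphi$, compute $\mathbb{E}[(X_{t-\varepsilon}-X_t)\varphi(X_t)] = \mathbb{E}[X_{t-\varepsilon}(\varphi(X_{t-\varepsilon})-\varphi(X_t))]+\mathbb{E}[X_{t-\varepsilon}\varphi(X_{t-\varepsilon})]-\mathbb{E}[X_t\varphi(X_t)]$. Then apply Itô's formula forward on $[t-\varepsilon,t]$: the first term gives $-\varepsilon\,\mathbb{E}[\mathcal{L}\varphi\cdot x]$ adjusted by the cross-variation term $a\nabla\varphi$, and the second gives $-\varepsilon\,\mathbb{E}[\mathcal{L}(x\varphi)]$. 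Integrating by parts against $p(t,\cdot)$ produces $\int(-b+\frac{\nabla\cdot(pa)}{p})\varphi\,p\,\mathrm{d}x$. Since $\varphi$ is arbitrary, this identifies the limit for a.e. $x$, which explains the ``almost every $x$'' in the statement. I would present this as the main argument, since it only uses Itô's formula and integration by parts.

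The main obstacle is justifying the limit interchange. This means making precise that the $\varepsilon$-limit of the conditional expectation exists pointwise, not just weakly. Concretely, one must control the moments of $X$ on $[t-\varepsilon,t]$ and assume enough regularity of $p$ and $a$, with decay at infinity, for the integration by parts to produce no boundary terms (relevant for positive processes such as BESQ at $0$). I would absorb these into the ``suitable conditions'' and state the result in the a.e. sense.
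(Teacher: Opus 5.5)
Your first route is the paper's proof: invoke Theorem~\ref{thm:SDErev}, discard the stochastic integral, and pass to the limit in the time-averaged reversed drift $\bar b=-b+a\nabla\log p+\nabla\cdot a$. You note that this needs the process-level Haussmann--Pardoux statement, i.e.\ the joint law of $(Y_s,Y_{s+\varepsilon})$ and not only the marginals as Theorem~\ref{thm:SDErev} is phrased. You also use continuity of $\bar b$ where the paper appeals to the Lebesgue differentiation theorem. Both points make your version slightly more careful than the paper's sketch.

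The weak-form computation you propose as the main argument is a genuinely different route. The statement conditions only on $X_t$, so the joint law of $(X_{t-\varepsilon},X_t)$ is fixed by the forward dynamics. Hence Itô's formula forward in time, plus one integration by parts against $p(t,\cdot)$, recovers the backward drift without any reversal theorem. This requires fewer hypotheses: no well-posedness or Markov property of the reversed process is needed.

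It also matches how the identity is used later. Write $\mathbb{E}[(X_{t-\varepsilon}-X_t)\varphi(X_t)]=\mathbb{E}[\mathbb{E}(X_{t-\varepsilon}-X_t\,|\,X_0,X_t)\varphi(X_t)]$ and let $\varepsilon\to0$ under a domination assumption. This gives the Section~\ref{sc3} formulae $\mathbb{E}(h(X_0,X_t)\,|\,X_t=x)=a\nabla\log p+\nabla\cdot a-b$ a.e.\ directly, where $h(z,x)$ is the explicit limit computed there.

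What the weak route does not give is existence of the pointwise limit in \eqref{eq:key2}. It only shows that $g_\varepsilon(x):=\varepsilon^{-1}\mathbb{E}(X_{t-\varepsilon}-X_t\,|\,X_t=x)$ converges weakly against $p(t,x)\,\mathrm{d}x$. So the argument ``$\varphi$ is arbitrary'' identifies the a.e.\ limit only once you know it exists, with uniform integrability, e.g.\ from your first route.

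There is also a sign error in the decomposition you display, and it is not an identity. Its right side minus its left side equals $2\mathbb{E}[X_{t-\varepsilon}(\varphi(X_{t-\varepsilon})-\varphi(X_t))]$, which is of order $\varepsilon$, exactly the order you are computing. The correct split is
\[
\mathbb{E}[(X_{t-\varepsilon}-X_t)\varphi(X_t)]=\mathbb{E}[X_{t-\varepsilon}(\varphi(X_t)-\varphi(X_{t-\varepsilon}))]+\mathbb{E}[X_{t-\varepsilon}\varphi(X_{t-\varepsilon})]-\mathbb{E}[X_t\varphi(X_t)].
\]
Set $\mathcal{L}_t\varphi:=b\cdot\nabla\varphi+\frac12\mathrm{tr}(a\nabla^2\varphi)$.

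\begin{itemize}
\item The first term is $+\varepsilon\,\mathbb{E}[X_t\mathcal{L}_t\varphi(X_t)]+o(\varepsilon)$, by conditioning on $\mathcal{F}_{t-\varepsilon}$. No cross-variation enters here, since $X_{t-\varepsilon}$ is $\mathcal{F}_{t-\varepsilon}$-measurable.
\item The second and third terms together give $-\varepsilon\,\mathbb{E}[\mathcal{L}_t(x\varphi)(X_t)]+o(\varepsilon)$.
\item The $a\nabla\varphi$ term comes from the generator product rule $\mathcal{L}_t(x_i\varphi)-x_i\mathcal{L}_t\varphi=b_i\varphi+(a\nabla\varphi)_i$.
\end{itemize}

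So the limit is $-\mathbb{E}[(b\varphi+a\nabla\varphi)(X_t)]$. Integrating by parts then gives $\int(-b+p^{-1}\nabla\cdot(pa))\varphi\,p\,\mathrm{d}x$, as you claim. With your sign you would instead get $-\mathbb{E}[(2x\mathcal{L}_t\varphi+b\varphi+a\nabla\varphi)(X_t)]$.
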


\begin{proof}
By Theorem \ref{thm:SDErev}, we have:
\begin{multline*}
Y_{T-t + \varepsilon} - Y_{T-t} = \int^{T-t + \varepsilon}_{T-t} \left(-b(T-s, Y_s) + a(T-s, Y_s) \nabla \log p(s, Y_s) + \nabla \cdot a(T-s, Y_s) \right)\,\mathrm{d}s \\
+ \int^{T-t + \varepsilon}_{T-t}  \sigma(T-s, Y_s) \,\mathrm{d}B_s.
\end{multline*}
By the Lebesgue differentiation theorem, we get for almost every $x$,
\begin{equation}
\label{eq:Leb}
\lim_{\varepsilon \to 0} \frac{1}{\varepsilon} \mathbb{E}\left(Y_{T-t + \varepsilon} - Y_{T-t} \,|\, Y_{T-t} = x\right)
= -b(t, x) + a(t, x) \nabla \log p(t,x) + \nabla \cdot a(t,x).
\end{equation}
Identifying the left side of \eqref{eq:Leb} with  $\lim_{\varepsilon \to 0} \frac{1}{\varepsilon} \mathbb{E}\left(X_{t - \varepsilon} - X_t \,|\, X_t = x\right)$ yields the desired result.
\end{proof}

\quad The identity \eqref{eq:key2} provides a systematic way to derive Tweedie's formulae for general diffusion models
via the computation of $\lim_{\varepsilon \to 0} \frac{1}{\varepsilon} \mathbb{E}\left(X_{t - \varepsilon} - X_t \,|\, X_t = x\right)$:
\begin{equation*}
    \nabla \log p (t,x) = a(t,x)^{-1} \left(b(t,x) + \lim_{\varepsilon \to 0} \frac{1}{\varepsilon} \mathbb{E}\left(X_{t - \varepsilon} - X_t \,|\, X_t = x\right) - \nabla \cdot a(t,x) \right),
\end{equation*}
when $a(t,x)$ is invertible.
As shown in Section \ref{sc3},
this quantity has a closed-form expression for many important models.
Also note that the ``almost everywhere" identity \eqref{eq:key2} is suitable for denoising score matching, which is defined as an $L^2$ loss.

\section{Tweedie's Formulae}
\label{sc3}

In this section we derive Tweedie's formulae for various diffusion processes. Although the SDE
(\ref{eq:SDE}) is formulated in $\mathbb{R}^d$ with an $n$-dimensional Brownian motion, as discussed earlier diffusion generative models (including the VE and VP) always noises/denoise independently to each component of the vector state. Therefore, throughout this section we assume that $d=n=1$.

\subsection{Variance Exploding/Preserving Processes}
\label{sc31}

We start with deriving  the formulae for the processes in the Gaussian family most commonly used by the current generative diffusion models.
\subsubsection{Variance Exploding Processes}
Consider the VE process:
\begin{equation*}
\mathrm{d}X_t = \sigma(t) \,\mathrm{d}W_t, \quad X_0 \sim p_{\mathrm{data}}(\cdot),
\end{equation*}
where $\sigma(\cdot)$ is positive, continuous and bounded away from $0$. Then we have
\begin{equation}\label{VE_expression}
X_t=X_0+\sqrt{\int_0^t\sigma^2(s)\,\mathrm{d}s}\cdot Z,\quad Z\sim\mathcal{N}(0,1).
\end{equation}
Our goal is to compute the conditional expectation
$\mathbb{E}(X_{t-\varepsilon} \,|\, X_0 = z, X_t = x)$ for $0<\varepsilon \le t $.

\quad Let
$p_{\scaleto{\Sigma}{5 pt}}(x):= \frac{1}{\sqrt{2 \pi \Sigma}} \exp\left( -\frac{x^2}{ 2 \Sigma}\right)$
be the probability density of a Gaussian random variable with mean zero and variance $\Sigma>0$.
Set
\begin{equation*}
\Sigma_1:=\int_{0}^{t-\varepsilon} \sigma^2(s)\,\mathrm{d}s, \quad \Sigma_2:=\int_{t-\varepsilon}^t \sigma^2(s)\,\mathrm{d}s.
\end{equation*}
It is known (see \cite{FPY92}) that the probability density of $(X_{t - \varepsilon}\,|\, X_0 = z, X_t = x)$ is given by
\begin{equation*}
\begin{aligned}
\mathrm{d} \mathbb{P}(X_{t - \varepsilon} \in \mathrm{d}y \,|\, X_0 = z, X_t = x)/\mathrm{d}y
& = \frac{p_{\scaleto{\Sigma_1}{5 pt}}(z-y) p_{\scaleto{\Sigma_2}{5 pt}}(x-y)}{p_{\scaleto{\Sigma_1 + \Sigma_2}{5 pt}}(x-z)} \\
& \propto \exp\left(-\frac{\Sigma_1 + \Sigma_2}{2 \Sigma_1 \Sigma_2}\left(y - \frac{x\Sigma_1 + z\Sigma_2}{\Sigma_1 + \Sigma_2} \right)^2 \right).
\end{aligned}
\end{equation*}
That is, $(X_{t - \varepsilon}\,|\, X_0 = z, X_t = x)$ is a Gaussian random variable with mean $\frac{x\Sigma_1 + z\Sigma_2}{\Sigma_1 + \Sigma_2}$
and variance $\frac{\Sigma_1 \Sigma_2}{\Sigma_1 + \Sigma_2}$.
Thus,
\begin{equation*}
\mathbb{E}(X_{t-\varepsilon} \,|\, X_0 = z, X_t = x) = \frac{x\Sigma_1 + z\Sigma_2}{\Sigma_1 + \Sigma_2},
\end{equation*}
and
\begin{equation*}
\lim_{\varepsilon \to 0} \frac{1}{\varepsilon} \mathbb{E}(X_{t - \varepsilon} - X_t \,|\, X_0 = z, X_t  = x) =
(z-x) \sigma^2(t)\left(\int_0^t \sigma^2(s)\,\mathrm{d}s\right)^{-1}.
\end{equation*}
This recovers the standard Tweedie's formula:
\begin{equation}\label{Tweedie_BM}
\nabla \log p_{\sigma}^{\mathrm{VE}}(t,x) = \left(\int_0^t \sigma^2(s)\,\mathrm{d}s\right)^{-1} \left(\mathbb{E}(X_0\,|\, X_t =x) - x\right).
\end{equation}

\subsubsection{Variance Preserving Processes} For the VP process
\begin{equation*}
    \mathrm{d}X_t = -\alpha(t) X_t \,\mathrm{d} t +  \sqrt{2\alpha(t)}\,\mathrm{d}W_t, \quad X_0 \sim p_{\mathrm{data}}(\cdot),
\end{equation*}
where $\alpha(\cdot)$ is a increasing positive function, we know that
$$
X_t=e^{-\int_0^t \alpha(s)\,\mathrm{d}s} \cdot X_0+\sqrt{1-e^{-2\int_0^t \alpha(s)\,\mathrm{d}s}}\cdot Z, \quad Z\sim \mathcal{N}(0,1).
$$
Hence, by \eqref{VE_expression} and \eqref{Tweedie_BM}, we get Tweedie's formula for the VP process:
\begin{equation}\label{Tweedie_VP}
    \nabla \log p^{\mathrm{VP}}_{\alpha}(t,x) =(1-e^{-2\int_0^t \alpha(s)\,\mathrm{d}s})^{-1} ( e^{-\int_0^t \alpha(s)\,\mathrm{d}s}\, \mathbb{E}(X_0 \,|\, X_t = x) -x).
\end{equation}

\subsection{Geometric Brownian Motion}

We consider $b(t,x) = \mu(t) x$ and $\sigma(t,x) = \sigma(t) x$ with $\sigma(t)\geq 0$.
The forward process is the time-dependent geometric Brownian motion:
\begin{equation*}
\mathrm{d}X_t = \mu(t) X_t \,\mathrm{d}t + \sigma(t) X_t \,\mathrm{d}W_t, \quad X_0 \sim p_{\mathrm{data}}(\cdot),
\end{equation*}
which has a closed form solution
\begin{equation*}
X_t = X_0 \exp\left(\int_0^t\left(\mu(s)-\frac{1}{2}\sigma^2(s)\right)\,\mathrm{d}s + \int_0^t \sigma(s)\,\mathrm{d}W_s \right), \quad t \ge 0.
\end{equation*}
It is easy to see that
\begin{equation*}
\mathbb{E}(X_{t - \varepsilon} \,|\, X_0 = z, X_t = x)
= z e^{\int_0^{t-\epsilon}(\mu(s)-\frac{1}{2}\sigma^2(s))\,\mathrm{d}s }\cdot \mathbb{E}\left(e^{M_{t-\epsilon}} \,|\, M_t = \kappa(t)\right),
\end{equation*}
where
\begin{equation*}
M_t:=\int_0^t\sigma(s)\,\mathrm{d}W_s, \quad \text{and}\quad  \kappa(t): = \log\left(\frac{x}{z}\right) - \int_0^t \left(\mu(s)-\frac{1}{2}\sigma^2(s)\right)\,\mathrm{d}s, \quad \forall t\geq 0.
\end{equation*}
By the Dubins–Schwarz theorem \cite{dubins1965continuous}, there exists a copy of Brownian motion, denoted by  $\{B_t\}_{t \ge 0}$, such that $M_t=B_{\tau_t}$, where $\tau_t:=\langle M\rangle_t=\int_0^t \sigma^2(s)\,\mathrm{d}s$. Then we have
$$
\mathbb{E}\left(e^{M_{t-\epsilon}} \,|\, M_t = \kappa(t)\right)=\mathbb{E}\left(e^{B_{\tau_{t-\epsilon}}} \,|\, B_{\tau_t} = \kappa(t)\right)=\exp\left(\frac{\int_0^{t-\epsilon}\sigma^2(s)\,\mathrm{d}s}{\int_0^{t}\sigma^2(s)\,\mathrm{d}s}\left(\kappa(t)+\frac{1}{2}\int_{t-\epsilon}^t\sigma^2(s)\,\mathrm{d}s \right)\right),
$$
where we have used the fact that $(B_{\tau_{t-\epsilon}}\,|\, B_{\tau_t}=\kappa(t))\sim \mathcal{N}(\kappa(t)\tau_{t-\epsilon}/\tau_t, \tau_{t-\epsilon}(\tau_t-\tau_{t-\epsilon})/\tau_t)$ in the second equality.
Consequently, we have
\begin{align*}
\mathbb{E}(X_{t - \varepsilon} \mid X_0 = z, X_t = x)
= {}& x \exp\Bigg(
\Bigg(
\frac{\int_0^{t-\epsilon}(\mu(s)-\frac{1}{2}\sigma^2(s))\,\mathrm{d}s}
{\int_0^{t}(\mu(s)-\frac{1}{2}\sigma^2(s))\,\mathrm{d}s}
- \frac{1}{\int_0^{t}\sigma^2(s)\,\mathrm{d}s}\log\frac{x}{z} \\
&\quad + \frac{\int_0^{t-\epsilon}\sigma^2(s)\,\mathrm{d}s}
{\int_0^{t}\sigma^2(s)\,\mathrm{d}s}
\left(
\frac{-\int_0^t(\mu(s)-\frac{1}{2}\sigma^2(s))\,\mathrm{d}s}
{\int_{t-\epsilon}^{t}\sigma^2(s)\,\mathrm{d}s}
+ \frac{1}{2}
\right)
\Bigg)
\int_{t-\epsilon}^t \sigma^2(s)\,\mathrm{d}s
\Bigg)
\end{align*}
and
\begin{equation*}
\lim_{\varepsilon \to 0} \frac{1}{\varepsilon} \mathbb{E}(X_{t - \varepsilon} - X_t \,|\, X_0 = z, X_t  = x) = x\left(
-\frac{\sigma^2(t)}{\int_0^t \sigma^2(s)\,\mathrm{d}s}
\log\frac{x}{z}
-\mu(t)+\frac{\sigma^2(t)}{2}+\sigma^2(t)\frac{\int_0^t\mu(s)\,\mathrm{d}s}{\int_0^t \sigma^2(s)\,\mathrm{d}s}
\right).
\end{equation*}
We obtain Tweedie's formula for time-dependent GBM:
\begin{equation}\label{Tweedie_GBM}
\nabla \log p_{\mu, \sigma}^{\mathrm{GBM}}(t,x) = \left(\frac{\int_0^t \mu(s)\,\mathrm{d}s}{\int_0^t\sigma^2(s)\,\mathrm{d}s} - \frac{3}{2} \right)\frac{1}{x} - \frac{1}{\int_0^t\sigma^2(s)\,\mathrm{d}s}\frac{\log x}{x} + \frac{1}{\int_0^t\sigma^2(s)\,\mathrm{d}s} \frac{1}{x}\mathbb{E}(\log X_0 \,|\, X_t = x).
\end{equation}

\subsection{Squared Bessel Processes}

Here we take $b(t,x) = 2(\nu + 1)$ with $\nu > 0$, and $\sigma(t,x) = 2 \sqrt{x}$.
The forward process is governed by
\begin{equation}
\label{eq:SDEnu}
\mathrm{d}X_t = 2 (\nu + 1)\,\mathrm{d}t + 2 \sqrt{X_t} \,\mathrm{d}W_t, \quad X_0 \sim p_{\mathrm{data}}(\cdot),
\end{equation}
which is transient.
It follows from \cite[(2.i)]{PY82} (see also \cite[Chapter XI]{RY99}) that
the transition density of the process $\{X_t\}_{t \ge 0}$ is
\begin{equation*}
q(t,x,y):=\mathrm{d}\mathbb{P}(X_{t_0 + t} \in \mathrm{d}y\,|\, X_{t_0} = x)/\mathrm{d}y = \frac{1}{2t} \left( \frac{y}{x}\right)^{\frac{\nu}{2}}\exp\left(-\frac{x+y}{2t} \right)
I_\nu\left(\frac{\sqrt{xy}}{t} \right),
\end{equation*}
where $I_{\nu}(\cdot)$ is the modified Bessel function of the first kind of order $\nu$.
So the probability density of $(X_{t - \varepsilon}\,|\, X_0 = z, X_t = x)$ is given by
\begin{equation*}
\begin{aligned}
\mathrm{d} \mathbb{P}(X_{t - \varepsilon} \in \mathrm{d}y \,|\, X_0 = z, X_t = x)/\mathrm{d}y
& = \frac{q(t-\varepsilon, z,y) q(\varepsilon, y,x)}{q(t, z,x)} \\
& =\frac{\exp\left(-\frac{ty}{2 (t - \varepsilon) \varepsilon} \right)I_{\nu}\left( \frac{\sqrt{zy}}{t - \varepsilon}\right)
I_{\nu}\left( \frac{\sqrt{xy}}{\varepsilon}\right)}{\int_0^\infty \exp\left(-\frac{ty}{2 (t - \varepsilon) \varepsilon} \right)I_{\nu}\left( \frac{\sqrt{zy}}{t - \varepsilon}\right)
I_{\nu}\left( \frac{\sqrt{xy}}{\varepsilon}\right) \mathrm{d}y}.
\end{aligned}
\end{equation*}
By letting
\begin{equation}
\label{eq:FGdef}
F(a,b,c; \nu):= \int_0^\infty e^{-cu} I_{\nu}(a\sqrt{u}) I_{\nu}(b \sqrt{u})\,\mathrm{d}u \quad
\mbox{and} \quad
G(a,b,c; \nu):= -\frac{\partial F}{\partial c},
\end{equation}
we have
\begin{equation}
\label{eq:teps}
\mathbb{E}(X_{t - \varepsilon}\,|\, X_0 = z, X_t = x)
= \frac{G \left(\frac{\sqrt{z}}{t-\varepsilon}, \frac{\sqrt{x}}{\varepsilon}, \frac{t}{2(t - \varepsilon) \varepsilon}; \nu \right)}{F \left(\frac{\sqrt{z}}{t-\varepsilon}, \frac{\sqrt{x}}{\varepsilon}, \frac{t}{2(t - \varepsilon) \varepsilon}; \nu \right)}.
\end{equation}

\quad According to \cite[(10.43.28)]{DLMF} and by a change of variable $\sqrt{u} \to u$,
we get
\begin{equation}
\label{eq:Ffor}
F(a,b,c; \nu) = \frac{1}{c} \exp\left(\frac{a^2 + b^2}{4c} \right) I_\nu \left(\frac{ab}{2c}\right).
\end{equation}
Thus,
\begin{equation}
\label{eq:Gfor}
\begin{aligned}
G(a,b,c; \nu) &= \frac{1}{c^2}\exp\left(\frac{a^2 + b^2}{4c}\right)
\left( \left(1 + \frac{a^2 + b^2}{4c} \right)I_\nu\left(\frac{ab}{2c} \right) + \frac{ab}{2c}I'_\nu\left(\frac{ab}{2c} \right) \right) \\
& = \frac{1}{c^2}\exp\left(\frac{a^2 + b^2}{4c}\right)
\left( \left(1 + \frac{a^2 + b^2}{4c} + \nu \right)I_\nu\left(\frac{ab}{2c} \right) + \frac{ab}{2c}I_{\nu+1}\left(\frac{ab}{2c} \right) \right),
\end{aligned}
\end{equation}
where we use the fact that $I'_\nu(u) = \frac{\nu}{u} I_\nu(u) + I_{\nu+1}(u)$ (see \cite[(10.29.2)]{DLMF}) in the last equation.
Combining \eqref{eq:teps}, \eqref{eq:Ffor} and \eqref{eq:Gfor} yields
\begin{equation*}
\begin{aligned}
\mathbb{E}(X_{t - \varepsilon}\,|\, X_0 = z, X_t = x)
& = \frac{2(t - \varepsilon) \varepsilon}{t}
\left( 1 + \nu + \frac{\varepsilon^2 z + (t-\varepsilon)^2 x}{2 (t - \varepsilon) \varepsilon t} + \frac{\sqrt{xz}}{t} \frac{I_{\nu+1}\left(\frac{\sqrt{xz}}{t} \right)}{I_{\nu}\left(\frac{\sqrt{xz}}{t} \right)} \right) \\
& = x + 2 \varepsilon \left(1 + \nu - \frac{x}{t} + \frac{\sqrt{xz}}{t} \frac{I_{\nu+1}\left(\frac{\sqrt{xz}}{t} \right)}{I_{\nu}\left(\frac{\sqrt{xz}}{t} \right)}\right) + \mathcal{O}(\varepsilon^2),
\end{aligned}
\end{equation*}
which implies that
\begin{equation*}
\lim_{\varepsilon \to 0} \frac{1}{\varepsilon} \mathbb{E}(X_{t - \varepsilon} - X_t \,|\, X_0 = z, X_t  = x)
= 2 \left(1 + \nu - \frac{x}{t} + \frac{\sqrt{xz}}{t} \frac{I_{\nu+1}\left(\frac{\sqrt{xz}}{t} \right)}{I_{\nu}\left(\frac{\sqrt{xz}}{t} \right)}\right) .
\end{equation*}
We obtain Tweedie's formula for squared Bessel process:
\begin{equation}
\label{eq:TweedieBES}
\begin{aligned}
s_\nu^{\mathrm{BESQ}}(t,x):
& = \nabla \log p_\nu^{\mathrm{BESQ}}(t,x) \\
&= \frac{\nu}{x} -\frac{1}{2t} + \frac{1}{2t \sqrt{x}}\mathbb{E}\left(\frac{\sqrt{X_0} I_{\nu+1}\left(\frac{\sqrt{xX_0}}{t}\right)}{I_{\nu}\left(\frac{\sqrt{xX_0}}{t}\right)} \,\Bigg|\, X_t = x\right).
\end{aligned}
\end{equation}

\quad More generally, we take $b(t,x) = \mu$ and $\sigma(t,x) = \sigma \sqrt{x}$,
with $\sigma > 0$ and $\mu > \frac{\sigma^2}{2}$.
The forward process evolves as
\begin{equation*}
\mathrm{d}X_t = \mu \,\mathrm{d}t + \sigma \sqrt{X_t} \,\mathrm{d}W_t, \quad X_0 \sim p_{\mathrm{data}}(\cdot).
\end{equation*}
It is easy to see that $\{\frac{4}{\sigma^2}X_t\}_{t \ge 0}$ is a squared Bessel process with index
$\nu = \frac{2 \mu}{\sigma^2} - 1$.
Hence, Tweedie's formula is
\begin{equation*}
\nabla \log p_{\mu, \sigma}^{\mathrm{BESQ}}(t,x) =\frac{\sigma^2}{4} s_{\frac{2 \mu}{\sigma^2} - 1}^{\mathrm{BESQ}}\left(t,\frac{\sigma^2 x}{4}\right),
\end{equation*}
where $s_{\nu}^{\mathrm{BESQ}}(\cdot, \cdot)$ is defined by \eqref{eq:TweedieBES}.

\subsection{CIR Processes}
We now consider the forward process of the form
\begin{equation}\label{CIR}
\mathrm{d}X_t = \alpha(t) (\mu(t) - X_t) \,\mathrm{d}t +  \sigma(t) \sqrt{X_t} \,\mathrm{d}W_t, \quad X_0 \sim p_{\mathrm{data}}(\cdot).
\end{equation}
This process is known as the CIR process \cite{CIR85}, which has been used  to model interest rate \cite{rogers1995model,shirakawa2002squared},
as well as   continuous-state branching processes with immigration
as the scaling limit of the Galton–Watson counterparts \cite{KW71, Lamp67}.
\subsubsection{Time-Independent Case}
We first consider \eqref{CIR} where $\alpha(t), \mu(t)$ and $\sigma(t)$ are time independent; i.e.
$b(t,x) = \alpha (\mu - x)$ and $\sigma(t,x) = \sigma \sqrt{x}$,
with constants $\alpha > 0$, $\sigma > 0$, and $\mu > \frac{\sigma^2}{2 \alpha}$. 
It follows  from \cite[Equation (18)]{CIR85} (see also \cite[Proposition 6.3.1.1]{JYC09})
that $\{X_t\}_{t \ge 0}$ is a space-time-changed squared Bessel process:
\begin{equation*}
X_t \stackrel{d}{=} e^{-\alpha t} Z_{\frac{\sigma^2}{4 \alpha}(e^{\alpha t} -1)},
\end{equation*}
where $\{Z_t\}_{t \ge 0}$ is squared Bessel process with index $\nu = \frac{2 \alpha \mu}{\sigma^2} - 1$.
As a result, Tweedie's formula for CIR processes is
\begin{equation}\label{Tweedie_CIR}
\nabla \log p_{\alpha, \mu, \sigma}^{\mathrm{CIR}}(t,x) = e^{\alpha t} s_{\frac{2 \alpha \mu}{\sigma^2} - 1}^{\mathrm{BESQ}}\left(\frac{\sigma^2}{4 \alpha}(e^{\alpha t} -1),  e^{\alpha t} x\right),
\end{equation}
where $s_{\nu}^{\mathrm{BESQ}}(\cdot, \cdot)$ is defined by \eqref{eq:TweedieBES}.

\subsubsection{Time-Dependent Case}
 We now turn to the general time-dependent CIR \eqref{CIR}. We choose $\alpha(t)>0$, $\mu(t)>0$, and $\sigma(t)>0$ such that $\frac{2 \alpha(t) \mu(t)}{\sigma^2(t)} - 1\equiv \nu$. By \cite[Corollary 3.1]{shirakawa2002squared}, we have:
\begin{equation*}
X_t \stackrel{d}{=} e^{-\int_0^t\alpha(s)\,\mathrm{d}s} Z_{\frac{1}{4}\int_0^t \sigma^2(s)e^{\int_0^s \alpha(u)\,\mathrm{d}u}\,\mathrm{d}s},
\end{equation*}
where $\{Z_t\}_{t \ge 0}$ is squared Bessel process with index $\nu$.\footnote{As shown in \cite{shirakawa2002squared}, the time-dependent CIR process, after space-time scaling, is squared Bessel process with a ``time-dependent" index $\nu(t):=\frac{2 \alpha(t)\mu(t)}{\sigma(t)}-1$, being characterized by the Laplace transform.
For our generation purpose, we set this index to be constant/time-independent in order to apply the result from the squared Bessel process.} Tweedie's formula is then given by
\begin{equation}\label{Tweedie_CIR2}
\nabla \log p_{\alpha, \mu, \sigma}^{\mathrm{CIR}}(t,x) = e^{\int_0^t\alpha(s)\,\mathrm{d}s} s_{\nu}^{\mathrm{BESQ}}\left(\frac{1}{4}\int_0^t \sigma^2(s)e^{\int_0^s \alpha(u)\,\mathrm{d}u}\,\mathrm{d}s,  e^{\int_0^t\alpha(s)\,\mathrm{d}s} x\right),
\end{equation}
where $s_{\nu}^{\mathrm{BESQ}}(\cdot, \cdot)$ is defined by \eqref{eq:TweedieBES}.
\subsection{(Time-Dependent) CEV Processes}

We consider $b(t,x) = \mu(t) x$ and $\sigma(t,x) = \sigma(t) x^\beta$, with $\mu(t) > 0$, $\sigma(t) > 0$ and $\beta > 1$.
The forward process is\footnote{When $\mu(t) \equiv \mu$ and $\sigma(t) \equiv \sigma$ are time-independent,
this process is known as the constant elasticity of variance (CEV) process \cite{Cox75, DS02, EM82}.}
\begin{equation}
\mathrm{d}X_t = \mu(t) X_t \,\mathrm{d}t +  \sigma(t) X_t^\beta \,\mathrm{d}W_t, \quad X_0 \sim p_{\mathrm{data}}(\cdot).
\end{equation}
It follows from \cite[Lemma 6.4.3.1]{JYC09} that $\{X_t\}_{t\ge 0}$
is a time-change of a power of squared Bessel process
\begin{equation*}
X_t \stackrel{d}{=} e^{\int_0^t\mu(s)\,\mathrm{d}s} Z_{(\beta-1)^2\int_0^t \sigma^2(s)e^{2(\beta-1)\int_0^s \mu(u)\,\mathrm{d}u}\,\mathrm{d}s}^{-\frac{1}{2(\beta-1)}},
\end{equation*}
where $\{Z_t\}_{t \ge 0}$ is squared Bessel  with index $\frac{1}{2(\beta-1)}$.\footnote{In contrast with the CIR processes, the index does not depend on the time-dependent coefficients $\mu(t)$ and $\sigma(t)$.}
As a result, Tweedie's formula for (time-dependent) CEV processes is given by
\begin{equation*}
\begin{aligned}
 \nabla \log {}&p_{\mu, \sigma, \beta}^{\mathrm{CEV}}(t,x) = \frac{-2 \beta + 1}{x}\\
 &-\frac{2(\beta - 1) e^{2 (\beta- 1)\int_0^t \mu(s)\,\mathrm{d}s}}{x^{2 \beta - 1}}s^{\mathrm{BESQ}}_{\frac{1}{2(\beta - 1)}}\left((\beta-1)^2\int_0^t \sigma^2(s)e^{2(\beta-1)\int_0^s \mu(u)\,\mathrm{d}u}\,\mathrm{d}s, e^{2 (\beta- 1)\int_0^t\mu(s)\,\mathrm{d}s} x^{-2 (\beta - 1)}\right),
\end{aligned}
\end{equation*}
where $s_{\nu}^{\mathrm{BESQ}}(\cdot, \cdot)$ is defined by \eqref{eq:TweedieBES}.

\subsection{Bessel Processes}

Here we take $b(t,x) = \frac{2 \nu + 1}{2x}$ and $\sigma(t,x) = 1$, with $\nu > 0$.
The forward process is governed by
\begin{equation}
\mathrm{d}X_t = \frac{2 \nu + 1}{2X_t}\,\mathrm{d}t +  \,\mathrm{d}W_t, \quad X_0 \sim p_{\mathrm{data}}(\cdot).
\end{equation}
By \cite[(2.i')]{PY82},
the transition density of the process $\{X_t\}_{t \ge 0}$ is
\begin{equation*}
\tilde{q}(t,x,y):=\mathrm{d}\mathbb{P}(X_{t_0 + t} \in \mathrm{d}y\,|\, X_{t_0} = x)/\mathrm{d}y = \frac{1}{t} \left( \frac{y}{x}\right)^{\nu} y \exp\left(-\frac{x^2+y^2}{2t} \right)
I_\nu\left(\frac{xy}{t} \right).
\end{equation*}
So the probability density of $(X_{t - \varepsilon}\,|\, X_0 = z, X_t = x)$ is
\begin{equation*}
\begin{aligned}
\mathrm{d} \mathbb{P}(X_{t - \varepsilon} \in \mathrm{d}y \,|\, X_0 = z, X_t = x)/\mathrm{d}y
 =\frac{y \exp\left(-\frac{ty^2}{2 (t - \varepsilon) \varepsilon} \right)I_{\nu}\left( \frac{zy}{t - \varepsilon}\right)
I_{\nu}\left( \frac{xy}{\varepsilon}\right)}{\int_0^\infty y \exp\left(-\frac{ty^2}{2 (t - \varepsilon) \varepsilon} \right)I_{\nu}\left( \frac{zy}{t - \varepsilon}\right)
I_{\nu}\left( \frac{xy}{\varepsilon}\right) \,\mathrm{d}y}.
\end{aligned}
\end{equation*}
Note from \eqref{eq:FGdef} that
$F(a,b,c; \nu) = 2 \int_0^\infty u e^{-c u^2} I_\nu(au) I_\nu(bu) \,\mathrm{d}u$.
Furthermore, let
\begin{equation*}
H(a,b,c; \nu):= \int_0^\infty u^2 e^{-cu^2} I_\nu(au) I_\nu(bu) \,\mathrm{d}u.
\end{equation*}
Then we have
\begin{equation}
\label{eq:teps2}
\mathbb{E}(X_{t - \varepsilon}\,|\, X_0 = z, X_t = x)
= \frac{2 H \left(\frac{z}{t-\varepsilon}, \frac{x}{\varepsilon}, \frac{t}{2(t - \varepsilon) \varepsilon}; \nu \right)}{F \left(\frac{z}{t-\varepsilon}, \frac{x}{\varepsilon}, \frac{t}{2(t - \varepsilon) \varepsilon}; \nu \right)}.
\end{equation}
Recall the expression of $F(a,b,c)$ from \eqref{eq:Ffor}. However,
for general $\nu$, there is no closed form expression for $H(a,b,c)$.

\quad Next, let us consider the special case $\nu = \frac{1}{2}$,
which corresponds to the three dimensional Bessel process.
Noting $I_{\frac{1}{2}}(u) = \sqrt{\frac{2}{\pi}} \frac{\sinh u}{\sqrt{u}}$, we have
\begin{equation}
\label{eq:Hfor}
\begin{aligned}
& H\left(a,b,c; \frac{1}{2}\right) \\
&\quad = \frac{2}{\pi \sqrt{ab}} \int_0^\infty u e^{-cu^2} \sinh (au) \sinh(bu) \,\mathrm{d}u \\
& \quad = \frac{1}{\pi \sqrt{ab}} \left( \int_0^\infty u e^{-cu^2} \cosh((a+b)u) \,\mathrm{d}u -\int_0^\infty u e^{-cu^2} \cosh((a-b)u) \,\mathrm{d}u \right) \\
& \quad = \frac{1}{\pi \sqrt{ab}} \left(\frac{a+b}{2c} \int_0^\infty e^{-cu^2} \sinh((a+b)u) \,\mathrm{d}u -  \frac{a-b}{2c} \int_0^\infty e^{-cu^2} \sinh((a-b)u) \,\mathrm{d}u\right) \\
& \quad = \frac{(a+b)\exp\left(\frac{(a+b)^2}{4c} \right) \erf\left(\frac{a+b}{2 \sqrt{c}} \right) - (a-b)\exp\left(\frac{(a-b)^2}{4c} \right) \erf\left(\frac{a-b}{2 \sqrt{c}} \right)}{4 \sqrt{\pi abc^3}},
\end{aligned}
\end{equation}
where we use the formula \cite[(3.546.1)]{GR15} in the last equation,
with $\erf(u):= \frac{2}{\sqrt{\pi}} \int_{0}^u e^{-z^2}\,\mathrm{d}z$ the error function of the standard normal.
Combining \eqref{eq:teps2}, \eqref{eq:Ffor} and \eqref{eq:Hfor} yields:
\begin{equation*}
\begin{aligned}
\mathbb{E}(X_{t - \varepsilon}\,|\, X_0 = z, X_t = x)
&= \frac{\varepsilon z + (t-\varepsilon) x}{2t} \frac{e^{\frac{xz}{t}}}{\sinh \left( \frac{xz}{t}\right)} \erf\left( \frac{\varepsilon z + (t - \varepsilon) x}{\sqrt{2 t (t- \varepsilon) \varepsilon}}\right) \\
& \qquad \qquad   - \frac{\varepsilon z - (t-\varepsilon) x}{2t}\frac{e^{-\frac{xz}{t}}}{\sinh\left(\frac{xz}{t}\right)} \erf\left( \frac{\varepsilon z - (t - \varepsilon) x}{\sqrt{2 t (t- \varepsilon) \varepsilon}}\right) \\
& = x + \frac{\varepsilon(z-x)}{t} \coth\left(\frac{xz}{t} \right) + \mathcal{O}(\varepsilon^2),
\end{aligned}
\end{equation*}
which implies 
\begin{equation*}
\lim_{\varepsilon \to 0} \frac{1}{\varepsilon} \mathbb{E}(X_{t - \varepsilon} - X_t \,|\, X_0 = z, X_t  = x)
= \frac{z-x}{t} \coth\left( \frac{xz}{t}\right).
\end{equation*}
This leads to Tweedie's formula for three-dimensional Bessel process:
\begin{equation}
\label{eq:sbes}
s^{\mathrm{BES}}_{\frac{1}{2}}(t,x):=\nabla \log p^{\mathrm{BES}}_{\frac{1}{2}}(t,x) = \frac{1}{x} + \frac{1}{t} \mathbb{E}\left( (X_0 - x) \coth\left(\frac{x X_0}{t} \right) \Bigg| X_t = x \right).
\end{equation}

\quad A slightly more general case than the above is to choose  $b(t,x) = \frac{\sigma^2}{x}$ and $\sigma(t,x) = \sigma$, with $\sigma > 0$.
The forward process is
\begin{equation*}
\mathrm{d}X_t = \frac{\sigma^2}{X_t}\,\mathrm{d}t +  \sigma \,\mathrm{d}W_t, \quad X_0 \sim p_{\mathrm{data}}(\cdot).
\end{equation*}
It is easy to see that $\{\frac{1}{\sigma} X_t\}_{t \ge 0}$ is a three-dimensional Bessel process. As a result, Tweedie's formula is
\begin{equation*}
\nabla \log p^{\mathrm{BES}}_{\frac{1}{2},\sigma}(t,x) = \sigma s^{\mathrm{BES}}_{\frac{1}{2}}(t,\sigma x),
\end{equation*}
where $s^{\mathrm{BES}}_{\frac{1}{2}}(\cdot,\cdot)$ is defined by \eqref{eq:sbes}.

\section{Numerical experiments}
\label{sc4}
\subsection{Diffusion Models}\label{sc4-1}

\quad This section provides numerical experiments using non-Gaussian models, where we focus on GBM and CIR processes, to formulate the forward process of SDE-based diffusion models. The experimental details are deferred to Appendix~\ref{app:experiment}. For the sake of comparison, we also presents the results generated by Gaussian models. We first derive denoising score matching objectives for these models right from the Tweedie's formula established.

\medskip
{\bf Variance Exploding Processes.} From \eqref{Tweedie_BM} we know
$$
\mathbb{E}(X_0\mid X_t)=X_t+\Sigma^2(t)\cdot \nabla \log p_{\sigma}^{\mathrm{VE}}(t,X_t),
$$
where we have denoted $\Sigma^2(t):=\int_0^t \sigma^2(s)\,\mathrm{d}s$. Thus, the denoising score matching objective is
\begin{align*}
&\min_\theta \mathbb{E} \left| X_t+\Sigma^2(t)\cdot s_\theta^{\mathrm{VE}}(t,X_t)-X_0 \right|^2\\
={}&\min_\theta \mathbb{E}_{t\sim \mathcal{U}[0,T], \, X_0\sim p_\mathrm{data}(\cdot),\, Z\sim \mathcal{N}(0,1)} \Sigma^2(t) | \Sigma(t)\cdot s_\theta^{\mathrm{VE}}(t,X_0+\Sigma(t) Z)+Z |^2,
\end{align*}
where the equality is because $X_t=X_0+\Sigma(t)  Z$ with $Z\sim \mathcal{N}(0,1)$. One can alternatively adopt the reparameterization $\epsilon_\theta^{\mathrm{VE}}(t,x):=-\Sigma(t)\cdot s_\theta^\mathrm{VE}(t,x)$ to obtain a noise-prediction objective
\begin{equation*}
\min_\theta \mathbb{E}_{t\sim \mathcal{U}[0,T], \, X_0\sim p_\mathrm{data}(\cdot),\, Z\sim \mathcal{N}(0,1)} \Sigma^2(t) | \epsilon_\theta^{\mathrm{VE}}(t,X_0+\Sigma(t) Z)-Z |^2.
\end{equation*}

\medskip
{\bf Variance Preserving Processes.}  By~\eqref{Tweedie_VP}, we have
$$
\mathbb{E}(X_0\mid X_t)=\frac{1}{A(t)}(X_t+\Sigma^2(t)\cdot \nabla \log p_{\alpha}^{\mathrm{VP}}(t,X_t)),
$$
where $A(t):=\exp(-\int_0^t \alpha(s)\,\mathrm{d}s)$ and $ \Sigma^2(t):=1-A^2(t)=1-\exp(-2\int_0^t \alpha(s)\,\mathrm{d}s)$. Hence, the corresponding denoising score matching objective is given by
\begin{align*}
&\min_\theta \mathbb{E} \left| \frac{1}{A(t)}(X_t+\Sigma^2(t)\cdot s_\theta^{\mathrm{VP}}(t,X_t))-X_0 \right|^2\\
={}&\min_\theta \mathbb{E}_{t\sim \mathcal{U}[0,T], \, X_0\sim p_\mathrm{data}(\cdot),\, Z\sim \mathcal{N}(0,1)} \frac{\Sigma^2(t)}{A^2(t)} | \Sigma(t)\cdot s_\theta^{\mathrm{VP}}(t,A(t) X_0+\Sigma(t)  Z)+Z |^2\\
={}&\min_\theta \mathbb{E}_{t\sim \mathcal{U}[0,T], \, X_0\sim p_\mathrm{data}(\cdot),\, Z\sim \mathcal{N}(0,1)} \frac{1}{\lambda_t} | \epsilon_\theta^{\mathrm{VP}}(t,A(t) X_0+\Sigma(t)  Z)-Z |^2,
\end{align*}
where the first equality is because $
X_t=A(t) X_0+\Sigma(t) Z, \ Z\sim \mathcal{N}(0,1)$, and in the second inequality we let $\epsilon_\theta^\mathrm{VP}(t,x):=-\Sigma(t)\cdot s_\theta^\mathrm{VP}(t,x)$ be the noise-prediction reparameterization and $\lambda_t:=A^2(t)/\Sigma^2(t)$ the signal-to-noise ratio.

\medskip
{\bf Geometric Brownian Motion.}
It follows from \eqref{Tweedie_GBM} that
$$
\mathbb{E} (\log X_0\,|\,X_t)=\Sigma^2(t) X_t \nabla \log p_{\mu, \sigma}^{\mathrm{GBM}}(t,X_t)-\Sigma^2(t)  \left(\frac{U(t)}{\Sigma^2(t)}-\frac{3}{2}\right)+\log X_t,
$$
where $U(t):=\int_0^t \mu(s)\,\mathrm{d}s$ and $\Sigma^2(t):=\int_0^t \sigma^2(s)\,\mathrm{d}s$.
Replacing the true score $\nabla \log p_{\mu, \sigma}^{\mathrm{GBM}}(t,x)$ with parameterized family $\{s_\theta^{\mathrm{GBM}}(t,x)\}_\theta$,
we get the denoising score matching objective:
\begin{align*}
&\min_\theta \mathbb{E} \left| \Sigma^2(t)  X_t s_\theta^{\mathrm{GBM}}(t,X_t)-\Sigma^2(t)  \left(\frac{U(t)}{\Sigma^2(t)}-\frac{3}{2}\right)+\log X_t-\log X_0 \right|^2\\
={}&\min_\theta \mathbb{E}_{t\sim \mathcal{U}[0,T], \, X_0\sim p_\mathrm{data}(\cdot),\, Z\sim \mathcal{N}(0,1)} \Sigma^2(t) | \Sigma(t)(1+X_t s_\theta^{\mathrm{GBM}}(t,X_t))+Z |^2,
\end{align*}
where $X_t=X_0\exp(U(t)-\frac{1}{2}\Sigma^2(t)+\Sigma(t) Z)$
with $Z \sim \mathcal{N}(0,1)$.

\quad We need to be mindful  that na\"ively applying the above score matching objective for training leads to an excessively large initial loss due to the multiplicative exponential Gaussian noise in the forward process. Moreover, even when the score network is well trained, sampling from the reverse-time dynamics may suffer from numerical blow-up because of the state-dependent diffusion coefficient. To improve both the training and sampling stability, we choose a noise-prediction reparameterization:
\begin{equation*}
\epsilon_\theta^{\mathrm{GBM}}(t,x):=-\Sigma(t)(1+x s_\theta^{\mathrm{GBM}}(t,x)),
\end{equation*}
under which the score-matching objective can be equivalently rewritten as:
\begin{equation}
\min_\theta \mathbb{E}_{t\sim \mathcal{U}[0,T]} \left(\Sigma^2(t)  \ \mathbb{E}_{X_0\sim p_\mathrm{data}(\cdot), \, Z\sim \mathcal{N}(0,1)} \left| \epsilon_\theta^{\mathrm{GBM}}\left(t,X_0\exp\left(U(t)-\frac{1}{2}\Sigma^2(t)+\Sigma(t) Z\right)\right)-Z \right|^2\right).
\end{equation}
Then the Euler--Maruyama sampler for the backward generation is given by
\begin{align*}
y_{t+\Delta t}={}&y_t + ((2\sigma^2(T-t)-\mu(T-t))y_t+\sigma^2(T-t) y_t^2 s_\theta^{\mathrm{GBM}}(T-t, y_t))\Delta t + \sigma(T-t) y_t\sqrt{\Delta t}z_t\notag\\
={}&y_t\left(1+\left(\sigma^2(T-t)-\mu(T-t)-\frac{\sigma^2(T-t)}{\sigma_{T-t}}\epsilon_\theta^{\mathrm{GBM}}(T-t,y_t)\right)\Delta t + \sigma(T-t) \sqrt{\Delta t}z_t \right), \label{EMsampler_GBM_MNIST}
\end{align*}
where $z_t\sim \mathcal{N}(0,1)$ and $y_0\sim p_{\tiny \mbox{noise}}^{\tiny \mbox{GBM}}(\cdot)=\mathrm{LogNormal}(U(T)-\frac{1}{2}\Sigma^2(T), \Sigma^2(T))$.

\medskip
{\bf CIR Processes.}
We set $\sigma(t) = \sqrt{2\alpha(t)}$ and $\mu(t)\equiv \mu$, so that the stationary distribution is $p_{\tiny \mbox{noise}}^{\tiny \mbox{CIR}}(\cdot) = \mathcal{G}(\mu, 1)$, a Gamma distribution.
Let $A(t):=\int_0^t\alpha(s)\,\mathrm{d}s$. The conditional distribution
$X_t\,|\,X_0$ is $\frac{1-e^{-A(t)}}{2}K$, where $K\sim\chi^2(2\mu,2X_0\frac{1}{e^{A(t)}-1});$
see, e.g., \cite{richemond2022categorical}. Here, $\chi^2(k,m)$ is the non-central chi-squared distribution with $k$ degrees of freedom and non-centrality parameter $m$. The denoising score-matching objective can be derived from \eqref{Tweedie_CIR} as follows:
\begin{align}
\min_\theta \mathbb{E}\ \lambda(t) \left| \left(e^{-\frac{1}{2}{A(t)}}(e^{A(t)}-1)\left(s_\theta^\mathrm{CIR}(t,X_t)+\frac{1-\mu}{X_t}\right)+e^{\frac{1}{2}A(t)} \right)\sqrt{X_t}-\frac{\sqrt{X_0}I_\mu\left(\frac{2\sqrt{X_t X_0}}{e^{-\frac{1}{2}{A(t)}}(e^{A(t)}-1)}\right)}{I_{\mu-1}\left(\frac{2\sqrt{X_t X_0}}{e^{-\frac{1}{2}{A(t)}}(e^{A(t)}-1)}\right)} \right|^2,
\end{align}
where $\lambda(t)$ is the weighting function and the expectation is taken over $t\sim \mathcal{U}[0,T],\ X_0\sim p_\mathrm{data}(\cdot), \ K\sim\chi^2(2\mu,2X_0\frac{1}{e^{A(t)}-1})$, and $X_t=\frac{1-e^{-A(t)}}{2}K$. The Euler–Maruyama sampler for the backward generation is given by
\begin{equation}\label{EM_CIR}
y_{t+\Delta t}= y_t+\alpha(T-t)(2y_t s_\theta^\mathrm{CIR}(T-t, y_t)+2-\mu+y_t)\Delta t+\sqrt{2\alpha(T-t) y_t\Delta t}z_t,
\end{equation}
where $ z_t\sim \mathcal{N}(0,1)$ and $y_0\sim p_{\tiny \mbox{noise}}^{\tiny \mbox{CIR}}(\cdot)$. In our experiments below, we will set $\mu=1$, 
$\lambda(t)=e^{-A(t)}$,  and adopt the reparameterization $\epsilon_\theta^\mathrm{CIR}(t,x):=(1-e^{-A(t)})\cdot s_\theta^\mathrm{CIR}(t,x)+1$ to improve training efficiency and stability.

\subsubsection{Image Generation}\label{sec:mnist}

We present experimental results on the MNIST dataset. A key property of GBM and CIR processes is that, starting from a non-negative initial condition, the process remains non-negative for all time. Motivated by this property, we consider simplex diffusion models for categorical data \cite{floto2023diffusion, richemond2022categorical} which perform the diffusion process in the space of probability simplex, leveraging the discrete nature of the data.

\quad Following \cite{floto2023diffusion}, we create a discrete version of the MNIST dataset by mapping each pixel value in $\{0,1,\ldots,255\}$ to three categories: dark (0–85), medium (86–170), and bright (171–255). Each pixel is then represented as a 3-dimensional real vector. Specifically, we encode a dark/medium/bright pixel as $(a+b, a, a)$, $(a, a+b, a)$, or $(a, a, a+b)$ in $\mathbb{R}_+^3$, respectively, for some $a,b\geq 0$, thereby constructing the training dataset consisting of ``dimension-reduced" MNIST images (see Figure~\ref{fig:real_MNIST}). For generated samples, the pixel category is determined by the index of the largest component of the resulting 3-dimensional vector: if the maximum occurs in the first, second, or third dimension, the pixel is classified as dark, medium, or bright, respectively.

\begin{figure}[!htbp]
    \centering
    \caption{Real and Generated MNIST Images}
    \label{fig:generated_MNIST}

    \begin{subfigure}{0.45\textwidth}
        \centering
        \includegraphics[width=0.85\textwidth]{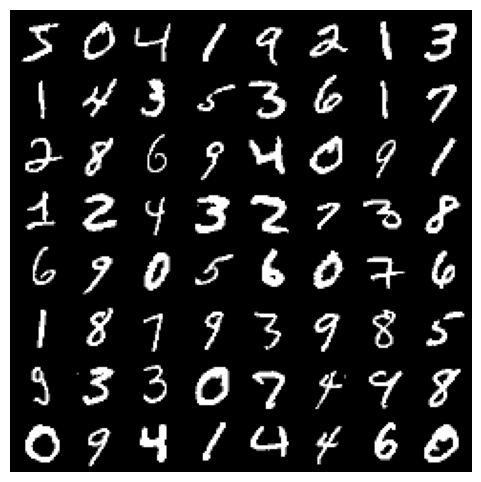}
        \caption{Preprocessed Dataset}
        \label{fig:real_MNIST}
    \end{subfigure}

    \vspace{1em}

        \begin{subfigure}{0.45\textwidth}
        \centering
        \includegraphics[width=0.85\textwidth]{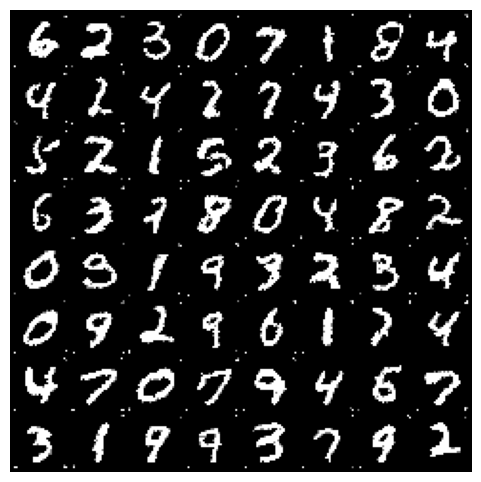}
        \caption{GBM-Based Samples}
        \label{fig:generated_MNIST_GBM}
    \end{subfigure}
  \hspace{0.01\textwidth}
        \begin{subfigure}{0.45\textwidth}
        \centering
        \includegraphics[width=0.85\textwidth]{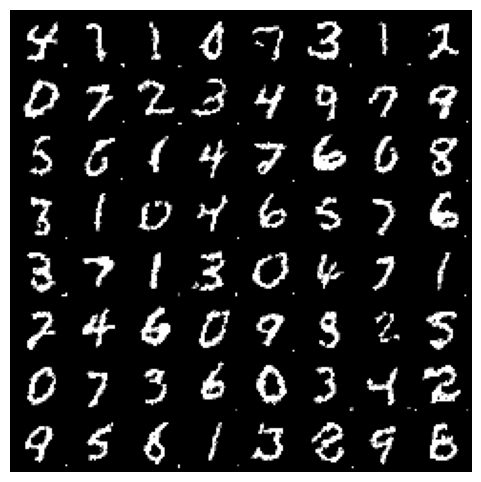}
        \caption{VE-Based Samples}
        \label{fig:generated_MNIST_BM}
    \end{subfigure}

        \vspace{1em}
    \begin{subfigure}{0.45\textwidth}
        \centering
        \includegraphics[width=0.85\textwidth]{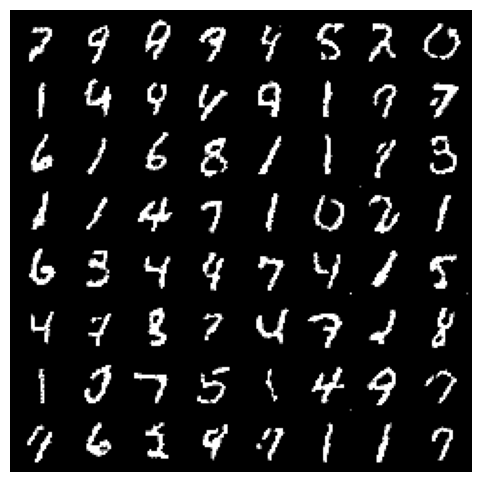}
        \caption{CIR-Based Samples}
        \label{fig:generated_MNIST_CIR}
    \end{subfigure}
      \hspace{0.01\textwidth}
        \begin{subfigure}{0.45\textwidth}
        \centering
        \includegraphics[width=0.85\textwidth]{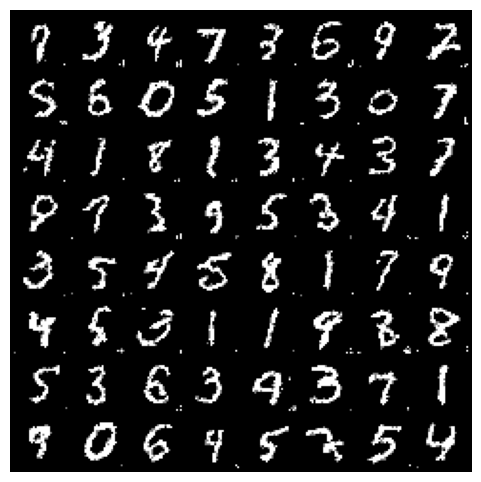}
        \caption{VP-Based Samples}
        \label{fig:generated_MNIST_VP}
    \end{subfigure}
\end{figure}

\quad Since both GBM and VE exhibit exploding variance and do not admit limiting distributions, whereas both CIR and VP have limiting distributions, we conduct a fair comparison by grouping models with similar properties. Specifically, we compare GBM with VE and CIR with VP, respectively. More details of the experiment can be found in Appendix \ref{app:experiment}.

\quad The visualization results are presented in Figure~\ref{fig:generated_MNIST}. The GBM-based samples exhibit a noisier background with moderate stroke clarity, where digits are recognizable but show some irregularity in stroke width and curvature. By contrast, the VE-based samples demonstrate cleaner digit boundaries and more uniform stroke thickness across the grid, suggesting that the VE SDE provides a more stable diffusion trajectory that better preserves the structural regularity of handwritten numerals. The sampling error for GBM-based models mainly stems from numerical instability in the SDE sampler during reverse-time sampling, as well as inefficient mixing of the forward process. First, in contrast to the additive Gaussian noise in VE models, the multiplicative exponential Gaussian noise in the GBM forward process can induce large fluctuations in the sampling trajectories, leading to numerical instability. In addition, the state-dependent diffusion coefficient $\sigma(t)x$ introduces a multiplicative dependence on the previous state, $y_t$, in each iteration, which would further accumulate numerical errors. Nevertheless, these adverse effects are partially alleviated by assigning pixel categories according to the index of the maximum component. Second, the absence of a stationary distribution of GBM leads to a mismatch between the true terminal distribution of the forward process, $p(T,\cdot)$, and the initialization distribution used for reverse-time sampling, $p_{\tiny \mbox{noise}}^{\tiny \mbox{GBM}}(\cdot)$, thereby introducing additional sampling error.

\quad The CIR-based samples demonstrate a clear visual advantage over their VP-based counterparts, exhibiting sharper stroke definition, more consistent digit morphology, and a notably cleaner background across the generated grid. The VP-based samples suffer from higher background noise levels and less precise stroke boundaries, with several digits showing signs of blurring or incomplete formation. These observations suggest the effectiveness of CIR-based models. Notably, as pointed out by \cite{richemond2022categorical}, using a CIR process as the forward process for simplex diffusion models seems more natural than using VE or VP SDEs, since the normalization of the CIR limiting Gamma distribution is a Dirichlet distribution, which serves as the conjugate prior of the categorical distribution and plays a role somewhat analogous to that of the Gaussian distribution in continuous diffusion models. We note that the Euler–Maruyama sampler \eqref{EM_CIR} for the CIR process may suffer from numerical instability in the final few steps, due to the time-dependent diffusion coefficient $\sqrt{2\alpha(t)x}$, in contrast to the VP formulation. Moreover, the VP framework is generally preferred in large-scale experiments because of the simplicity of its score-matching objective, as well as the flexibility in designing loss weighting functions and reverse-time samplers. Improving the training and sampling efficiency and the stability of CIR-based models remains an important direction for future work.

\quad Table~\ref{tab:fid} reports the relative FID \cite{heusel2017gans} of the four diffusion models, computed based on 2000 generated samples for each model. Since our primary interest lies in the effectiveness of these non-Gaussian models, we focus on relative sample quality rather than absolute performance. Accordingly, we treat the VE model as the baseline by normalizing its FID score as $1.00$ and report the scores of the VP-, GBM-, and CIR-based models relative to this baseline. Lower relative FID values indicate better performance. As shown, the CIR-based model achieves the best performance in this context of simplex diffusion models and the FID results are consistent with the visual quality presented in Figure~\ref{fig:generated_MNIST}. We note that further improvements in overall sample quality would require more sophisticated network architectures and more refined training and sampling strategies for different forward SDEs, which are beyond the scope of this work. 

\begin{table}[!t]
\centering
\caption{Relative FID Scores of Different Diffusion Models}
\label{tab:fid}
\begin{tabular}{lcccc}
\toprule
Model & VE & GBM & VP & CIR \\
\midrule
Relative FID $\downarrow$
& 1.00
& 1.28
& 1.24
& 0.80\\
\bottomrule
\end{tabular}
\end{table}

\subsubsection{Financial Time Series for Portfolio Management}\label{sec:fin}

For financial time series, we consider $N=4$ stocks: AAPL, AMZN, JPM, and TSLA. Let $r_t^i$ denote the log return of stock $i$ at time $t$, $i \in [N]$. For a consecutive time window $\{t+1, \dots, t+L\}$ of length $L$, we construct a data point
$$
(r_{t+1}^1,\cdots,r_{t+L}^1;r_{t+1}^2,\cdots,r_{t+L}^2;\cdots;r_{t+1}^{N},\cdots,r_{t+L}^{N})\in \mathbb{R}^{NL}.
$$

We set $L=64$ and let $t$ range over all trading days since January 1, 2010. This yields a dataset with 3,587 data points.

\quad  We present results generated by GBM- and VP-based diffusion models for their better empirical performances and relative simplicity \footnote{We also experimented with VE- and CIR-based models. For VE-based models, although the score network can be well-trained, the explosive behavior of the VE process and the high sensitivity of financial data leads to a shifted generated distribution. For CIR-based models, the score matching objective involves modified Bessel functions and multiple exponential terms, leading to high computational complexity and numerical instability, which makes the resulting sampling process unsuitable for this financial setting.}.
To apply GBM as the forward process, we exponentiate the log returns to obtain positive price ratios:
$$
e^{r_t^i}=\frac{p_t^i}{p_{t-1}^i}>0,
$$
where $p_t^i > 0$ is the price of stock $i$ at time $t$. The resulting data points take the form:
$$
(e^{r_{t+1}^1},\cdots,e^{r_{t+L}^1};e^{r_{t+1}^2},\cdots,e^{r_{t+L}^2};\cdots;e^{r_{t+1}^{N}},\cdots,e^{r_{t+L}^{N}})\in\mathbb{R}^{NL}_+.
$$

\quad Following \cite{guo2025diffusion, GTX26}, we evaluate 64-day cumulative log-returns under three portfolio strategies: the equal-weight portfolio, the Markowitz global minimum variance portfolio (GMVP), and the risk-parity portfolio. After converting the generated price ratios to log-returns by taking logarithm, we compare the distribution of the generated samples with that of the real data in terms of mean, standard deviation, quantiles, and overall distributional shape.

\renewcommand{\arraystretch}{1.3}
\begin{table}[!htbp]
\caption{Real and Generated 64-Day Log-Return Statistics of GBM- and VP-Based Models}
\label{tab:3port}
\centering
\resizebox{\textwidth}{!}{%
\begin{tabular}{l ccc ccc ccc}
\toprule
 & \multicolumn{3}{c}{Equal-Weight}
 & \multicolumn{3}{c}{GMVP}
 & \multicolumn{3}{c}{Risk-Parity} \\
\cmidrule(lr){2-4} \cmidrule(lr){5-7} \cmidrule(lr){8-10}
Statistics
& Real & GBM & VP
& Real & GBM & VP
& Real & GBM & VP \\
\midrule
Mean
& 6.56\% & 6.52\% &6.24\% & 6.29\% & 6.51\% & 6.28\% & 6.13\% & 6.14\% & 6.00\% \\
Median
& 6.01\% & 6.54\% & 6.35\% & 6.96\% & 6.55\% & 6.40\% & 6.58\% & 6.15\% & 6.03\% \\
Std Dev
& 11.65\% & 11.41\% & 8.99\% & 8.45\% & 11.41\% & 8.99\% & 9.73\% & 10.35\% & 7.96\% \\
1\% Quantile
& -25.20\% & -19.55\% & -16.83\% & -16.91\% & -19.55\% & -16.79\% & -21.02\% & -17.99\% & -14.06\% \\
5\% Quantile
& -11.74\% & -12.09\% & -8.73\% & -9.26\% & -12.09\% & -8.68\% & -10.61\% & -10.74\%  & -7.55\%\\
10\% Quantile
& -6.31\% & -7.99\% & -5.03\% & -4.92\% & -7.99\% & -4.99\% & -5.21\% & -7.20\% & -3.95\% \\
25\% Quantile
& -0.08\% & -1.05\% & 0.61\% & 2.05\% & -1.05\% & 0.66\% & 1.07\% & -0.67\% & 1.08\% \\
\bottomrule
\end{tabular}
}
\end{table}

\begin{figure}[htbp]
    \centering
    \caption{Real and Generated 64-Day Sum Log-Returns of Three Portfolios}\label{fig:return}
    \begin{subfigure}{0.9\textwidth}
        \centering
   \includegraphics[width=\linewidth]{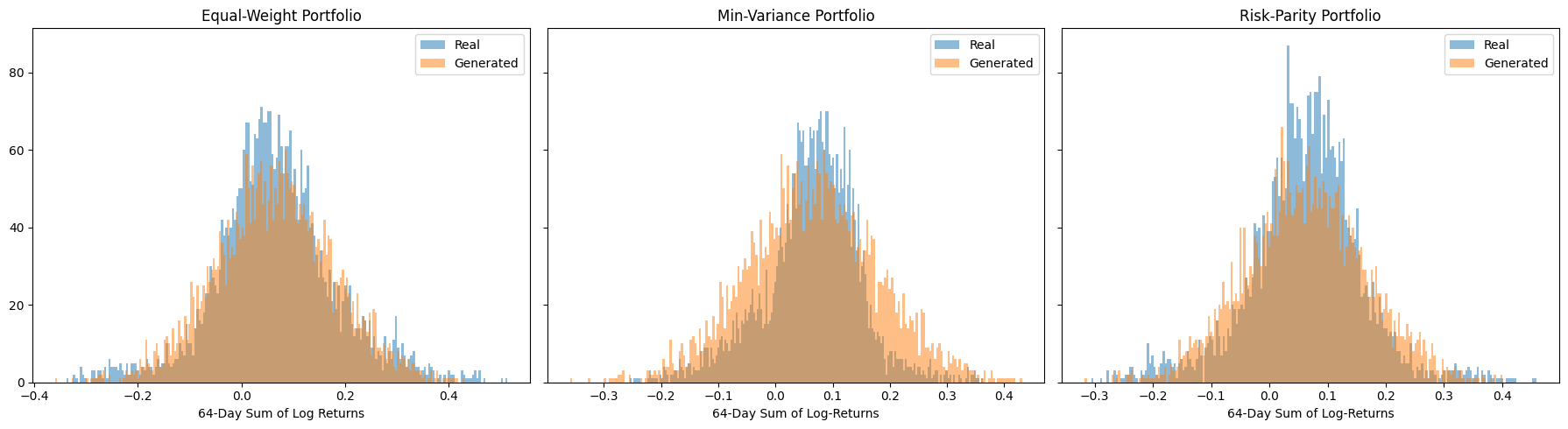}
        \caption{GBM-Based Samples}\label{fig:gbm}
    \end{subfigure}

    \vspace{0.4cm}

    \begin{subfigure}{0.9\textwidth}
        \centering
        \includegraphics[width=\linewidth]{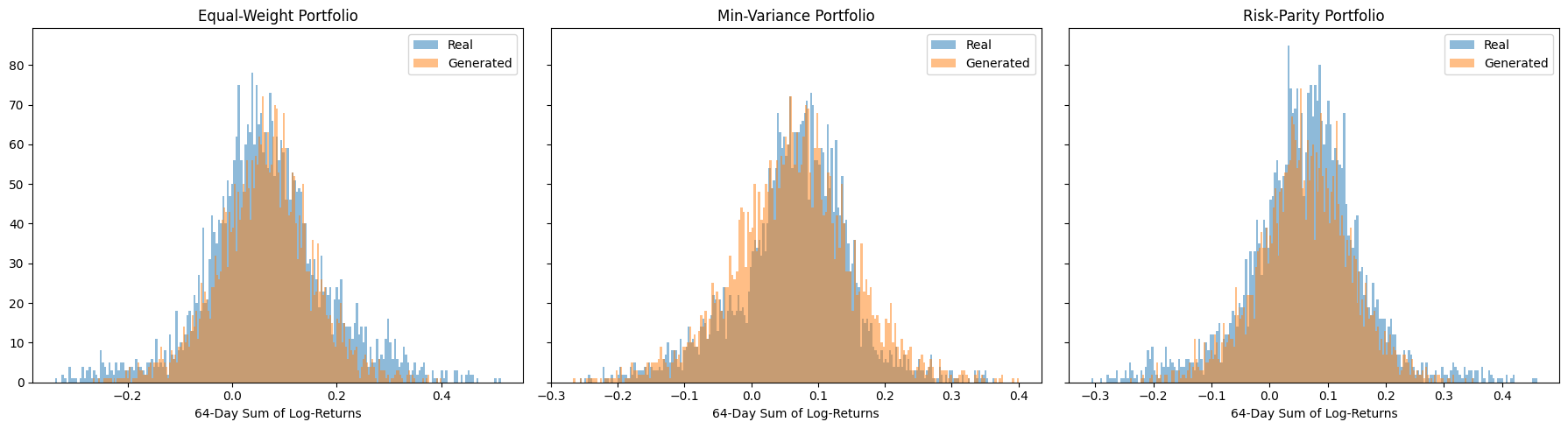}
        \caption{VP-Based Samples}\label{fig:vp}
    \end{subfigure}
\end{figure}

\quad The statistical properties of samples generated by time-dependent GBM- and VP-based models are summarized in Table~\ref{tab:3port}  and Figure~\ref{fig:return} presents the histograms of the real and generated 64-day cumulative log-returns under the three portfolio constructions.
We observe that the GBM-based model generates log-returns that align more closely with the real data in terms of mean and volatility than the VP-based model, particularly for the Equal-Weight and Risk-Parity portfolios. Both models exhibit a modest mismatch with the real data in the tail regions, which is largely attributable to the high sensitivity of financial time series to numerical errors inherent in the simulation pipeline.

\quad Overall, our results, if still preliminary,  show that the CIR-based simplex diffusion model excels in MNIST image generation while the GBM-based model outperforms in financial data generation. These findings highlight the potentials and effectiveness of non-Gaussian processes for formulating diffusion models in specific settings and tasks.

\subsection{Empirical Bayes Estimation}

Beyond its importance for diffusion models, Tweedie’s formula is also a cornerstone of empirical Bayes methods \cite{Efron11}. Here, we present an application to empirical Bayes estimation based on the BESQ version, which goes beyond Efron's generalized Tweedie's formula. In Appendix~\ref{app:gbm}, we also present an empirical Bayes estimation based directly on the GBM, and compare it with Brownian motion (BM) models by taking logarithm.

\quad Suppose there are some large number $N$ of possibly correlated noncentral chi-squared variates $z_i>0$ have been observed, each with its own unknown noncentrality parameter $u_i$,
$$
z_i\sim \chi^2(3,u_i),\quad i=1,2,\cdots,N.
$$
We aim to estimate the corresponding $u_i$ values through an empirical Bayes approach. Suppose that $u$ has been sampled from a prior distribution $g(\cdot)$, and then $z\,|\,u\sim \chi^2(3,u)$:
$$
u_1,\cdots,u_N\sim g(\cdot), \quad \text{and}\ z_i\sim \chi^2(3,u_i),\quad i=1,\cdots,N.
$$
Since $X_t\sim t\cdot \chi^2(2(\nu+1),X_0/t)$ for all $t>0$ in BESQ \eqref{eq:SDEnu}, by Tweedie's formula \eqref{eq:TweedieBES} (in which $\nu=\frac{1}{2}$, $t=1$, $X_0=u$, and $X_1=z$ are specified), we obtain the posterior expectation of $f(u,z)$ given $z$ as
$\mathbb{E}(f(u,z)\,|\, z)=(2 s(z)+1)\sqrt{z}$,
where $f(u,z):=\sqrt{u}\coth{\sqrt{uz}}$ and $s(\cdot)$ denotes the score function of the marginal density $p(\cdot)$ of $z$, i.e.,
$s(z)=\frac{\mathrm{d}}{\mathrm{d}z}\log p(z)$.
 Thus, the corresponding empirical Bayes formula is
\begin{equation}\label{EBF:BESQ}
f(\hat{u}_i,z_i)=(2\hat{s}(z_i)+1) \sqrt{z_i}\quad \text{for}\ i=1,\cdots,N,
\end{equation}
where $\hat{s}(\cdot)$ is an estimator of the true score $s(\cdot)$, which can be obtained by leveraging the observations $z_1,\cdots,z_N$ through Lindsey’s method \cite{efron2008microarrays,Efron11,efron2012large} and $\hat{u}_i$ is an estimate of $u_i$. Note that $f(u,z)$ is strictly increasing w.r.t. $u$ for any fixed $z>0$; so there exists a unique $\hat{u}_i$ satisfying \eqref{EBF:BESQ} for a given $z_i$.

\begin{figure}[!h]
    \centering
       \caption{Gamma Example}
    \label{fig:BESQ_combined}
    \begin{subfigure}{0.48\textwidth}
        \centering
        \includegraphics[width=0.9\textwidth]{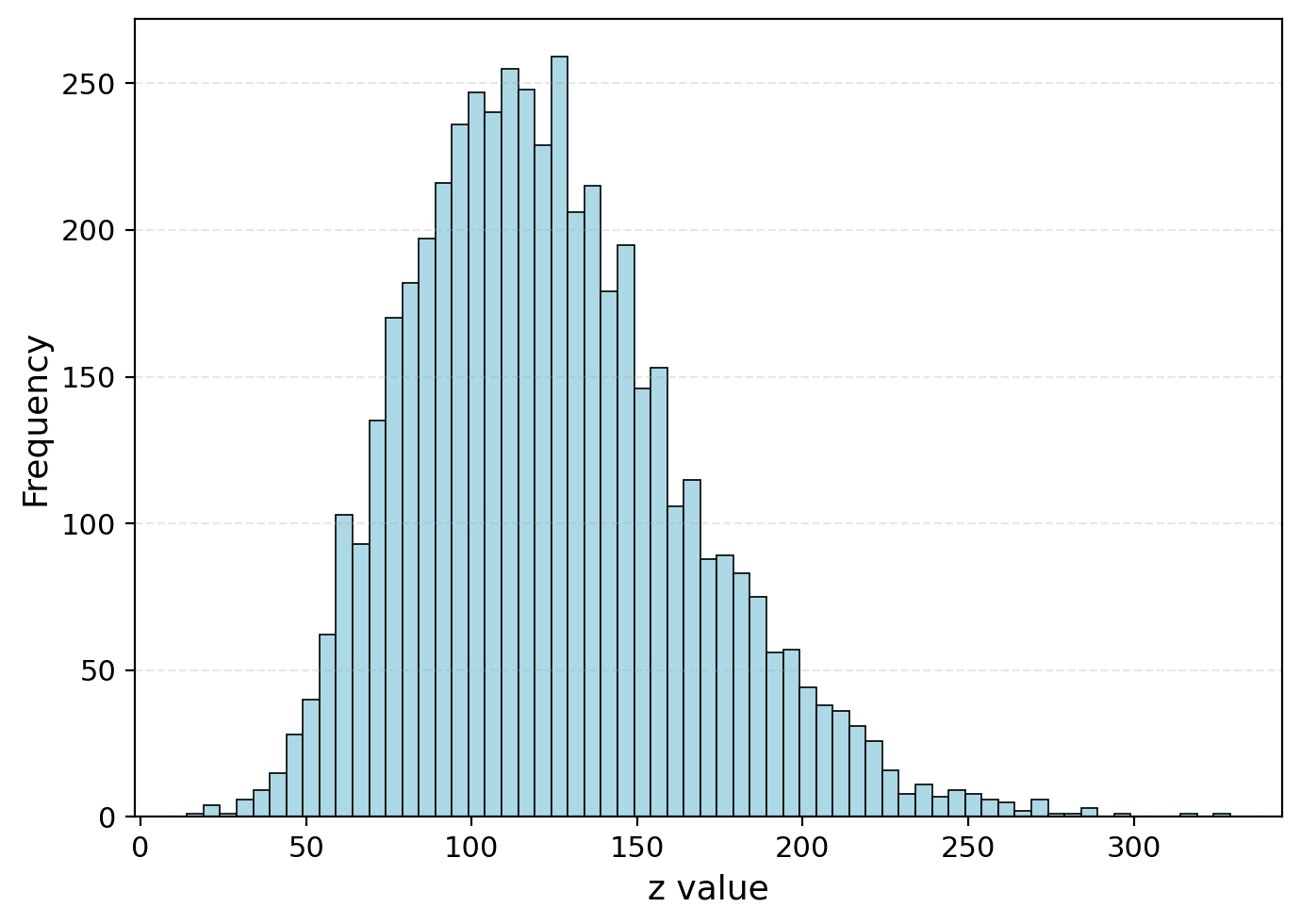}
        \caption{Histogram of $5000\ z_i$'s}
        \label{fig:frequency_BESQ}
    \end{subfigure}
    \hfill
    \begin{subfigure}{0.48\textwidth}
        \centering
        \includegraphics[width=0.9\textwidth]{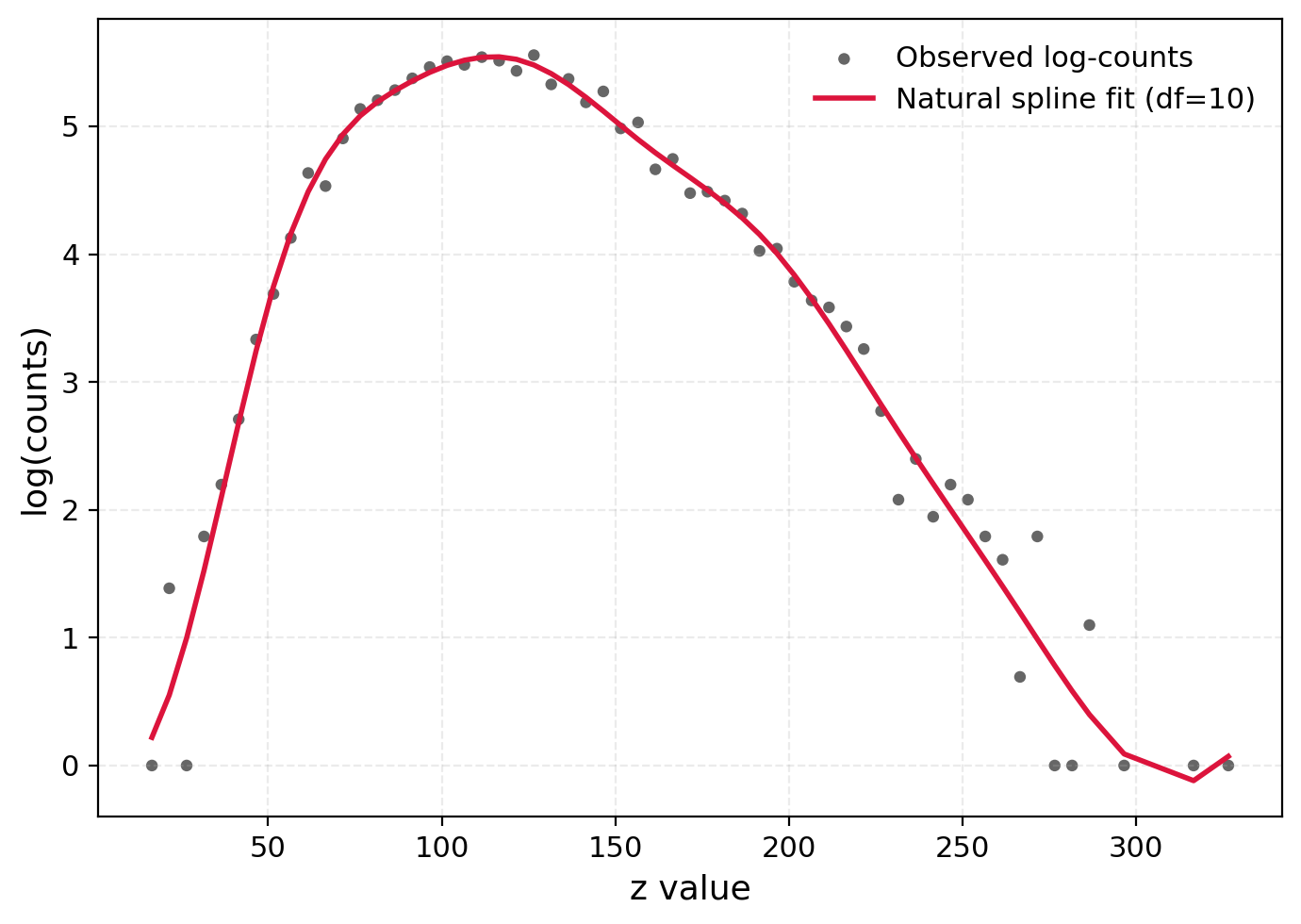}
        \caption{Natural Spline Fit with 10 Degrees of Freedom}
        \label{fig:spline_BESQ}
    \end{subfigure}
\end{figure}

\quad We randomly sample $N=5000$ values of $u_i$ from Gamma distribution $\Gamma(12,10)$.
Figure~\ref{fig:frequency_BESQ} shows the frequency of $5000$ generated $z_i$'s, where there are $63$ bins. Denote the center of the $k$-th bin by $x_k$ and the corresponding bar height by $y_k$ for $k=1,\dots,63$. Figure~\ref{fig:spline_BESQ} shows $\log y_k$ against $x_k$, with bins satisfying $y_k=0$ excluded, and a natural spline with 10 degrees of freedom is fitted to the points. The derivative of this spline provides a  score estimator $\hat{s}(\cdot)$. Figure~\ref{fig:Bayes_BESQ} presents empirical Bayes estimation curve $f(\hat{u},z)=(2\hat{s}(z)+1)\sqrt{z}$ for the Gamma example data, along with the actual $\{(z_i,f(u_i,z_i))\}_{i=1}^N$ plotted. One can see that the points are closely centered around the estimated curve, even for large and small values of $z_i$, indicating the effectiveness of the empirical Bayes approach for observations under noncentral chi-squared noise.

\begin{figure}[!h]
    \centering
    \caption{Empirical Bayes Estimation Curves for BESQ}
    \label{fig:Bayes_BESQ}
        \includegraphics[width=0.55\textwidth]{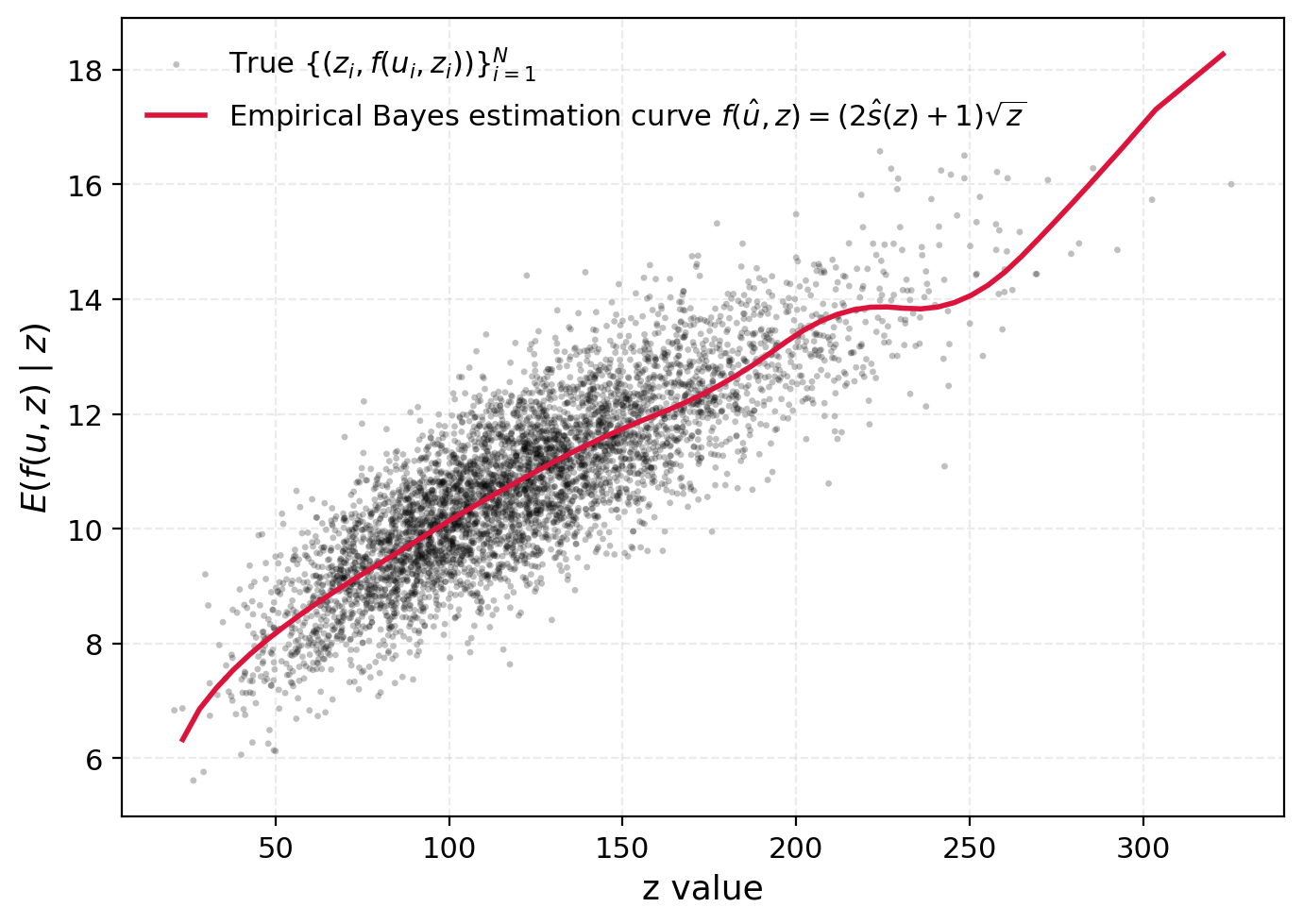}
\end{figure}

\section{Conclusion}\label{sc5}
\quad We derive analytically Tweedie’s formulae for several important non-Gaussian processes, including geometric Brownian motion, squared Bessel processes and CIR processes, which directly yields denoising score matching objectives for diffusion models based on the corresponding processes. Empirically, we employ Tweedie’s formulas derived from GBM and CIR processes for MNIST image generation and financial time series modeling, demonstrating the effectiveness of the resulting denoising score-matching methods. We also apply Tweedie’s formula under the BESQ framework to estimate the noncentrality parameter from the noncentral chi-squared noise for empirical Bayes estimation. To sum, we extend original Tweedie’s formula to non-Gaussian processes and showcase the promise of non-Gaussian diffusion models for practical applications, opening the gate to their adoption in task-specific domains.

\bigskip
{\bf Acknowledgment.}
Tang is supported by
NSF CAREER Award DMS-2538791, the Tang Family Assistant Professorship
and a Columbia-CityU/HK collaborative project that is supported by InnoHK Initiative, The Government of the HKSAR and the AIFT Lab.
Touzi is partially supported by NSF grant DMS-2508581.
Zhang is supported by Tang Fellowship.
Zhou acknowledges financial supports through the Nie Center for Intelligent Asset Management at Columbia.

\appendix
\section{Empirical Bayes Estimation Based on GBM}\label{app:gbm}
\quad Suppose some large number $N$ of possibly correlated log normal variates $z_i>0$ have been observed, each with its own unobserved parameter $u_i$:
$$
z_i\sim \mathrm{LogNormal}(u_i,\sigma^2),\quad i=1,2,\cdots,N.
$$
We estimate the corresponding $u_i$ values through an empirical Bayes approach. Suppose that $u$ has been sampled from a prior distribution $g(\cdot)$, and  $z\,|\,u\sim \mathrm{LogNormal}(u,\sigma^2)$ observed with $\sigma^2$ known:
$$
u_1,\cdots,u_N\sim g(\cdot), \quad \text{and}\ z_i\sim \exp(u_i+\sigma \mathcal{N}(0,1)),\quad i=1,\cdots,N.
$$
By Tweedie's formula \eqref{Tweedie_GBM} (in which $\mu=\sigma^2/2$, $t=1$, $X_0=u$, and $X_1=z$ are specified), we obtain the posterior expectation of $u$ given $z$ as
$\mathbb{E}(u\,|\, z)=\sigma^2 z \cdot s(z)+\sigma^2+\log z$,
where $s(\cdot)$ denotes the score function of the marginal density $p(\cdot)$ of $z$, i.e.,
$s(z)=\frac{\mathrm{d}}{\mathrm{d}z}\log p(z)$.
 Thus, the corresponding empirical Bayes formula is
$$
\hat{u}_i=\sigma^2 z_i\cdot \hat{s}(z_i)+\sigma^2+\log z_i\quad \text{for}\ i=1,\cdots,N,
$$
where $\hat{s}(\cdot)$ is an estimator to the true score $s(\cdot)$.

\begin{figure}[!h]
    \centering
       \caption{Exponential Example}
    \label{fig:GBM_combined}
    \begin{subfigure}{0.48\textwidth}
        \centering
        \includegraphics[width=0.9\textwidth]{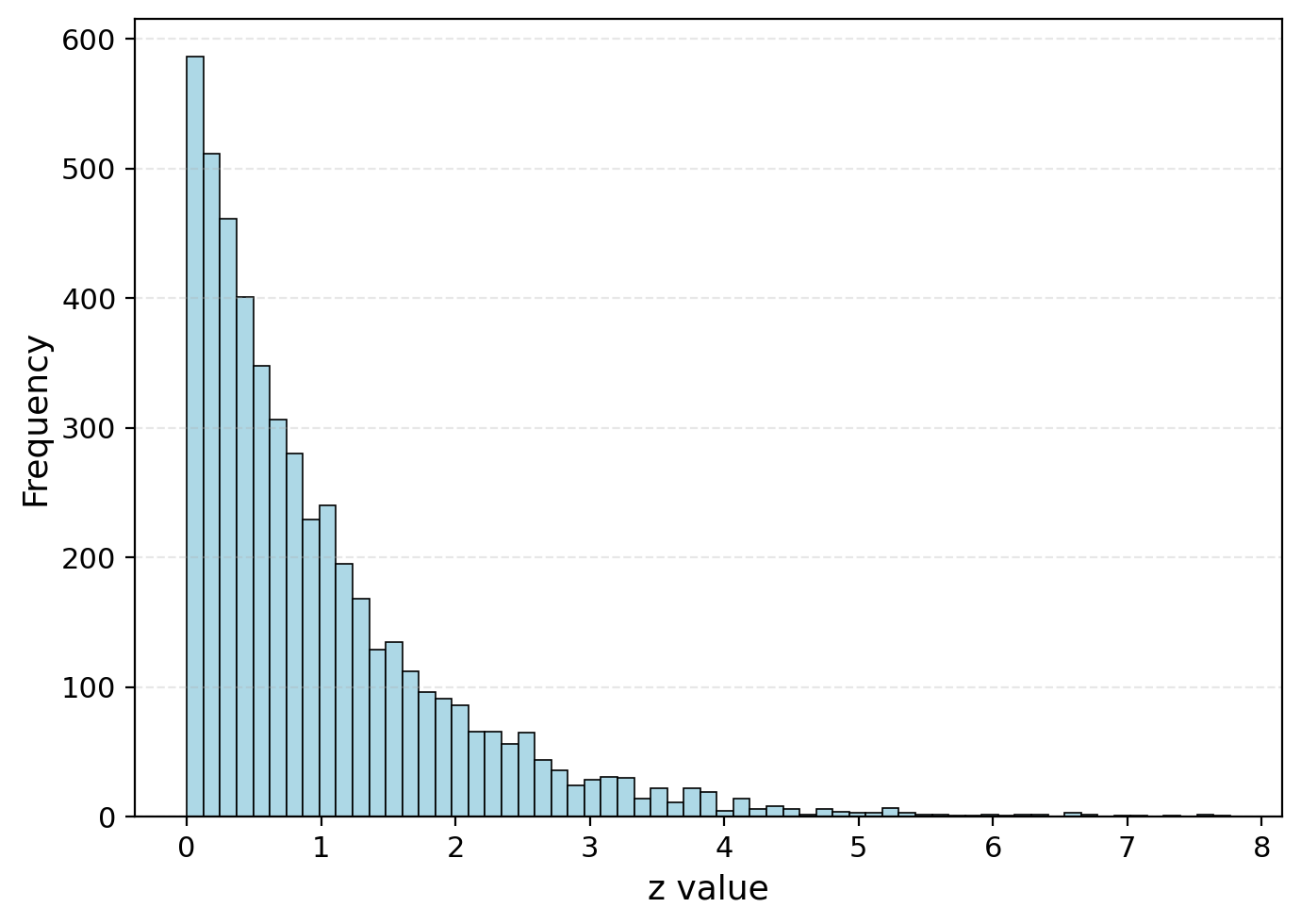}
        \caption{Histogram of $5000\ z_i$'s}
        \label{fig:frequency_GBM}
    \end{subfigure}
    \hfill
    \begin{subfigure}{0.48\textwidth}
        \centering
        \includegraphics[width=0.9\textwidth]{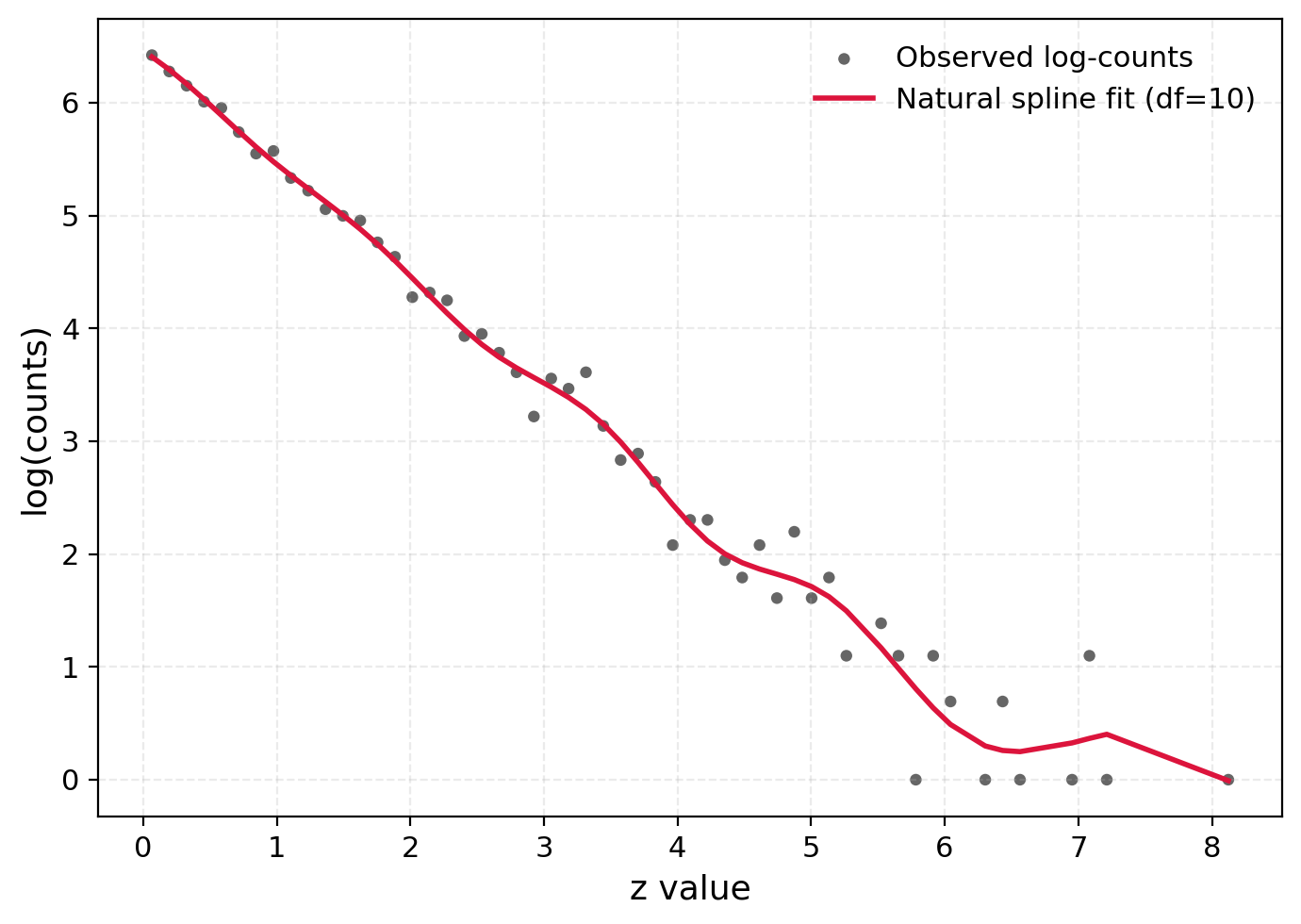}
        \caption{Natural Spline Fit with 10 Degrees of Freedom ($\sigma=0.1$)}
        \label{fig:spline_GBM}
    \end{subfigure}
\end{figure}

\quad Following the experimental setting in \cite{Efron11},
we set $N=5000$ $u_i$ values as $10$ repetitions each of
$$
u_i=\log\log\frac{500}{i-0.5},\quad i=1,\cdots,500.
$$
The empirical distribution of $e^{u_i}$ closely matches an exponential distribution with rate $1$. Then $z_i$ is generated via $\mathrm{LogNormal}(u_i,\sigma^2)$ for each $i$. Here, we take $\sigma=0.1$. Figure~\ref{fig:frequency_GBM} shows the frequency of $5000$ generated $z_i$'s, where there are $63$ bins. Denote the center of the $k$-th bin by $x_k$ and the corresponding bar height by $y_k$ for $k=1,\dots,63$. Figure~\ref{fig:spline_GBM} shows $\log y_k$ against $x_k$, with bins satisfying $y_k=0$ excluded, and a natural spline with 10 degrees of freedom is fitted to the points. The derivative of this spline provides a  score estimator $\hat{s}(\cdot)$. Figure~\ref{fig:Bayes_GBM} presents empirical Bayes estimation curve $\hat{u}(z)=\sigma^2 z\cdot \hat{s}(z)+\sigma^2+\log z$ for the exponential example data with $\sigma=0.1$, along with the actual $\{(z_i,u_i)\}_{i=1}^N$ plotted. One can see that the points are closely centered around the estimated curve, even for large values of $z_i$, indicating the effectiveness of the empirical Bayes approach for observations under log-normal noise.

\begin{figure}[!h]
    \centering
    \caption{Empirical Bayes Estimation Curves ($\sigma=0.1$)}
    \label{fig:Bayes_models}

    \begin{subfigure}[b]{0.45\textwidth}
        \centering
        \includegraphics[width=0.9\textwidth]{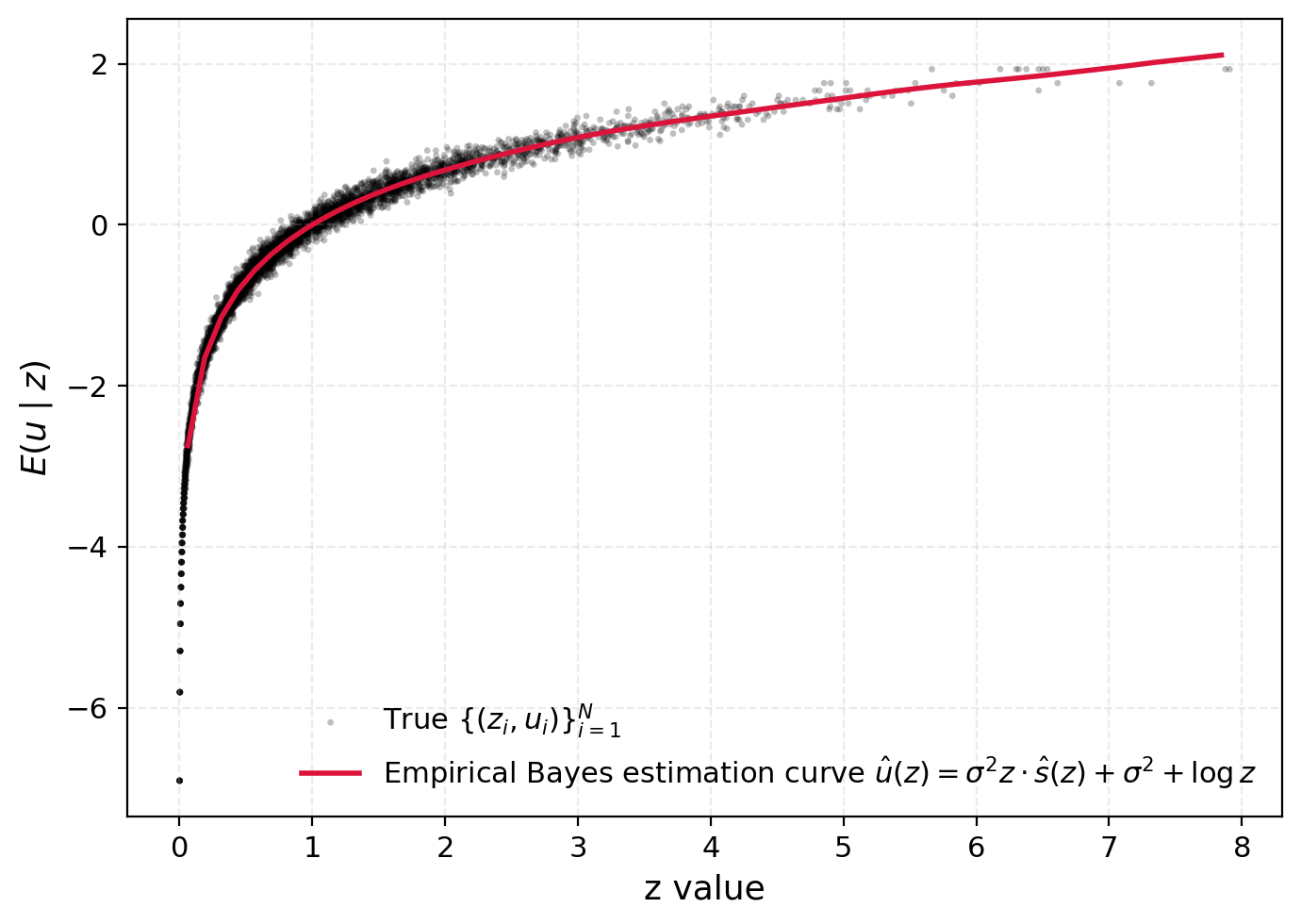}
        \caption{GBM model in $z$-space}
        \label{fig:Bayes_GBM}
    \end{subfigure}
      \hspace{0.01\textwidth}
    \begin{subfigure}[b]{0.45\textwidth}
        \centering
        \includegraphics[width=0.9\textwidth]{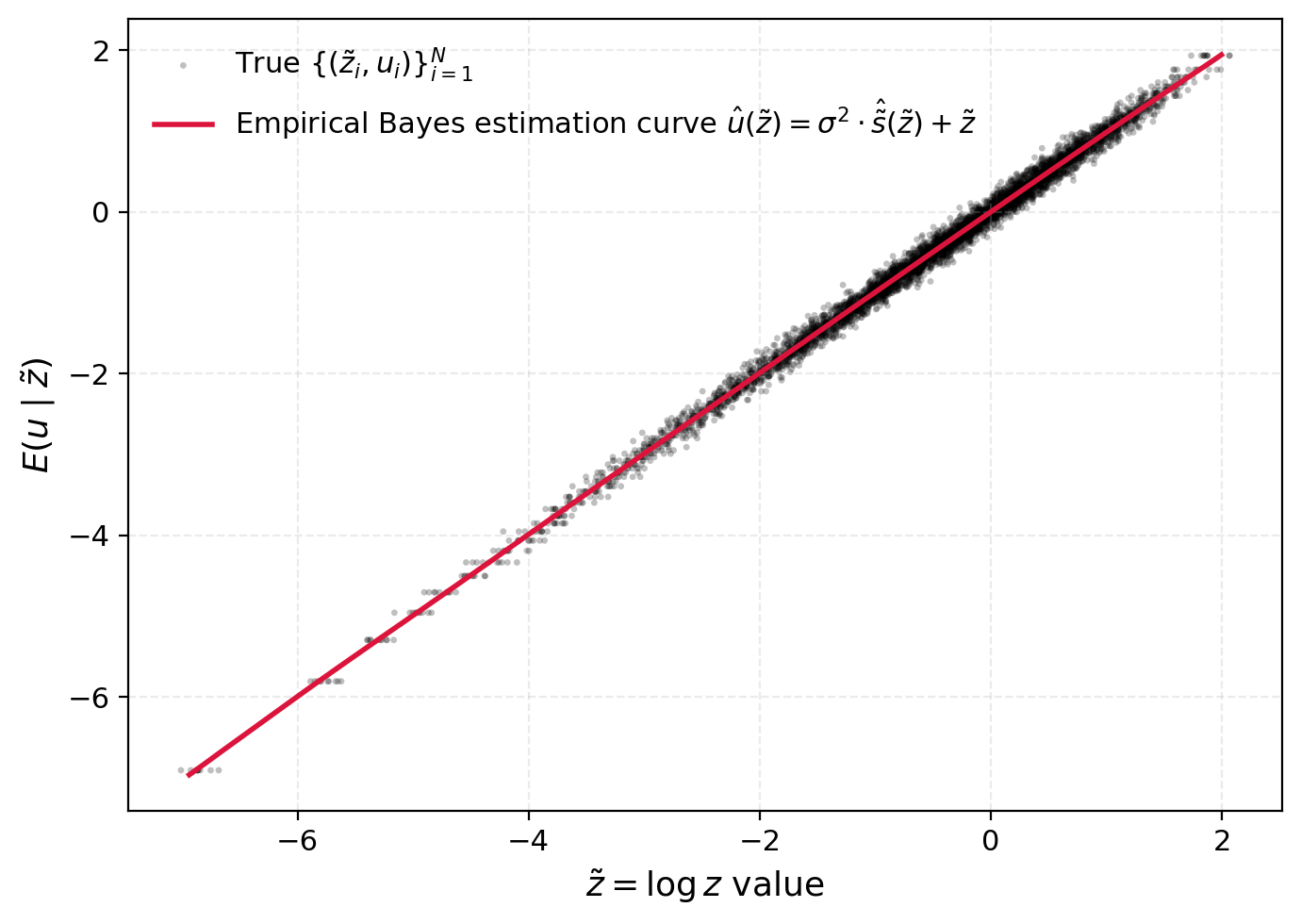}
        \caption{BM model in $(\log z)$-space}
        \label{fig:Bayes_BM}
    \end{subfigure}
\end{figure}

\begin{figure}[!h]
    \centering
    \caption{Empirical Bayes Estimation Curves ($\sigma=0.5$)}
    \label{fig:Bayes_models_0p5}

    \begin{subfigure}[b]{0.45\textwidth}
        \centering
        \includegraphics[width=0.9\textwidth]{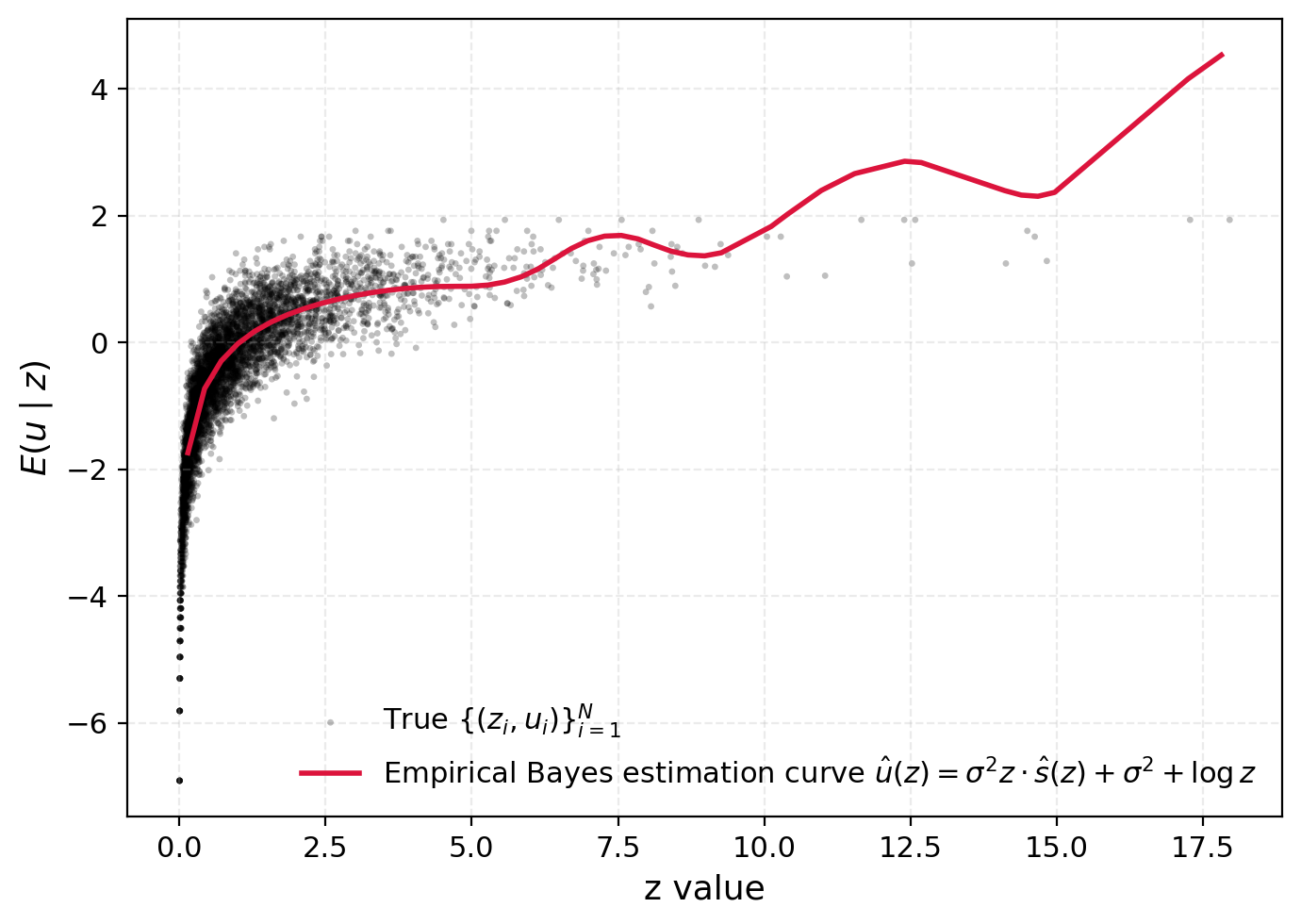}
        \caption{GBM model in $z$-space}
        \label{fig:Bayes_GBM_0p5}
    \end{subfigure}
      \hspace{0.01\textwidth}
    \begin{subfigure}[b]{0.45\textwidth}
        \centering
        \includegraphics[width=0.9\textwidth]{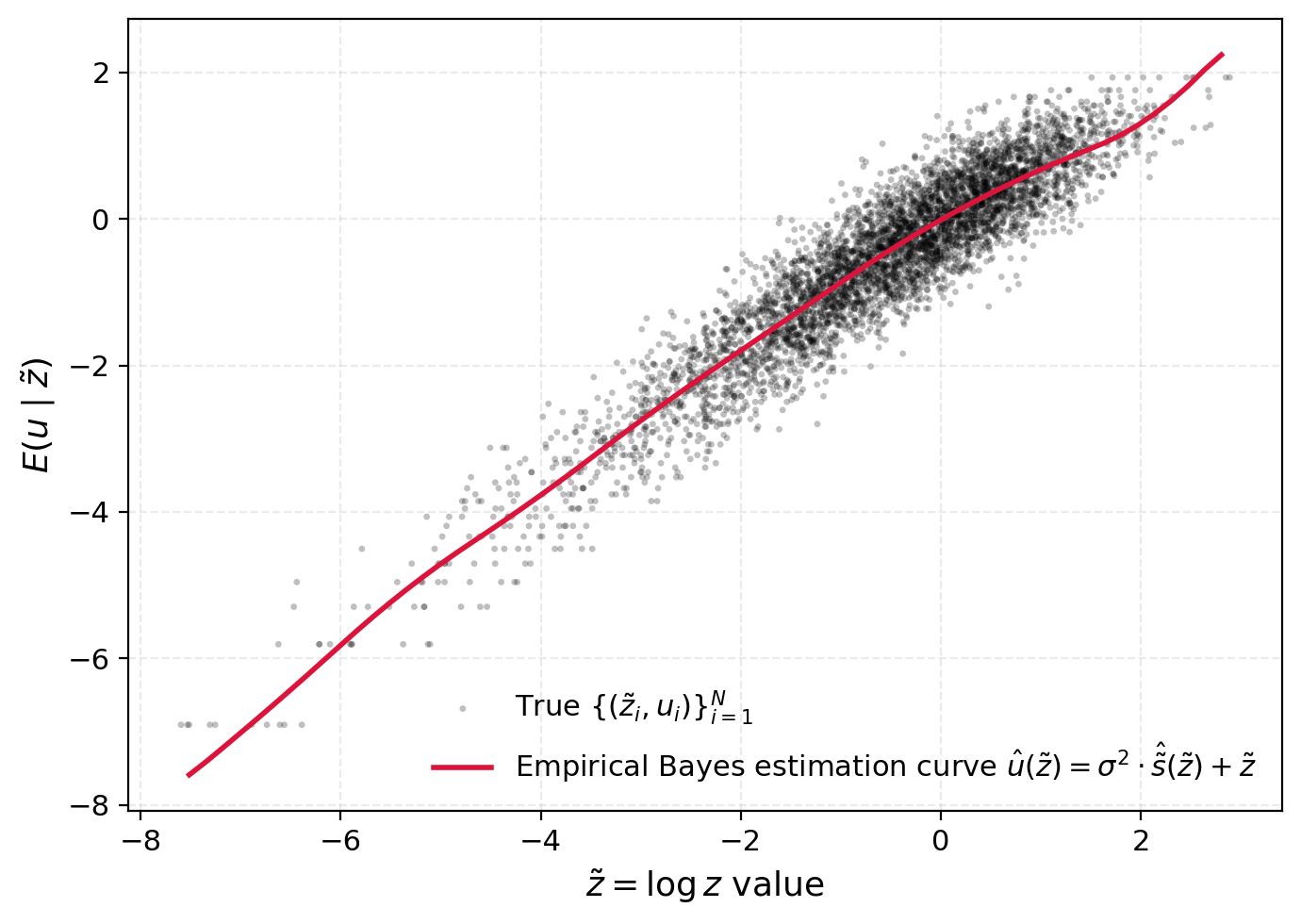}
        \caption{BM model in $(\log z)$-space}
        \label{fig:Bayes_BM_0p5}
    \end{subfigure}
\end{figure}

\begin{figure}[!h]
    \centering
    \caption{Empirical Bayes Estimation Curves ($\sigma=1.0$)}
    \label{fig:Bayes_models_1}

    \begin{subfigure}[b]{0.45\textwidth}
        \centering
        \includegraphics[width=0.9\textwidth]{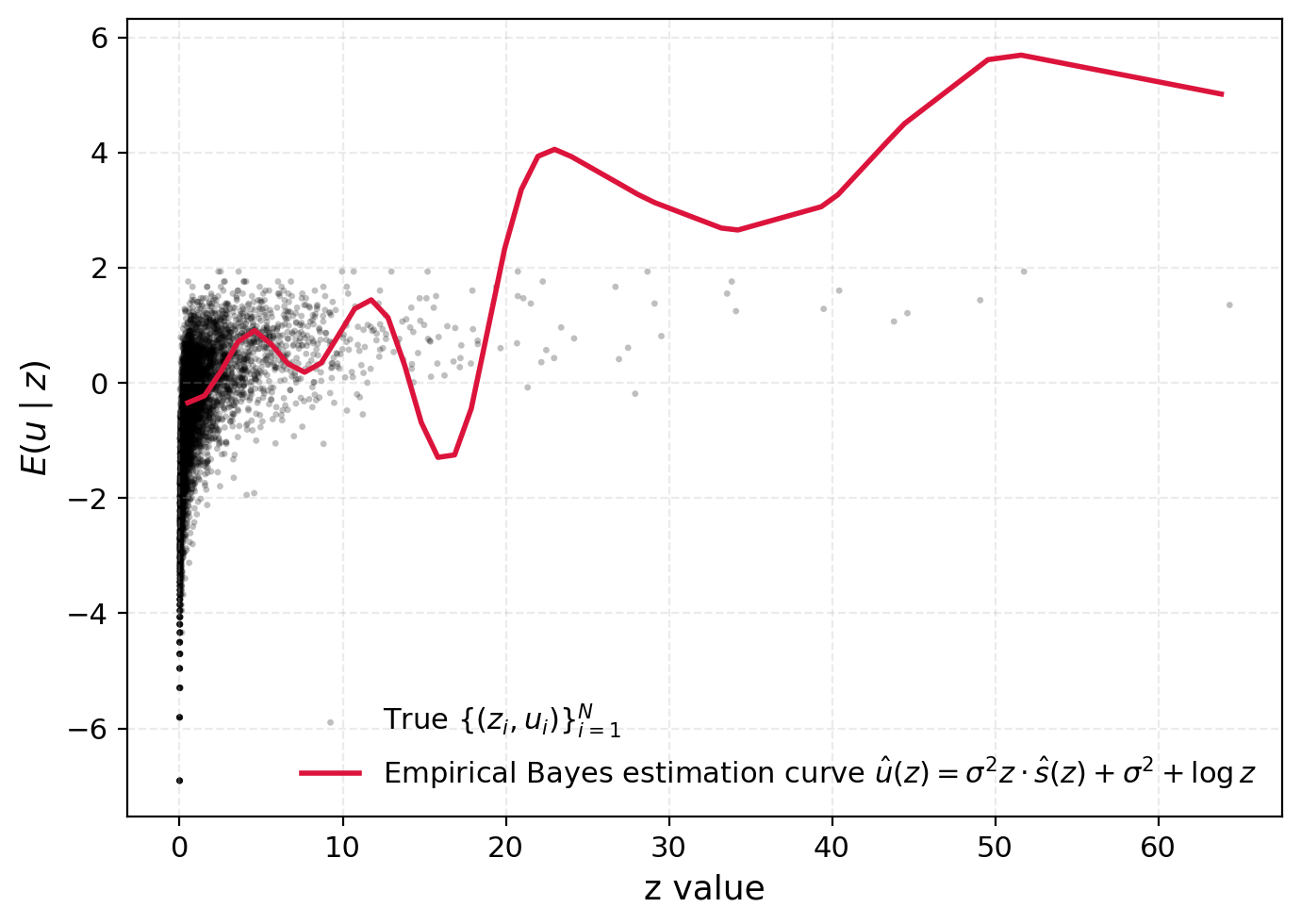}
        \caption{GBM model in $z$-space}
        \label{fig:Bayes_GBM_1}
    \end{subfigure}
      \hspace{0.01\textwidth}
    \begin{subfigure}[b]{0.45\textwidth}
        \centering
        \includegraphics[width=0.9\textwidth]{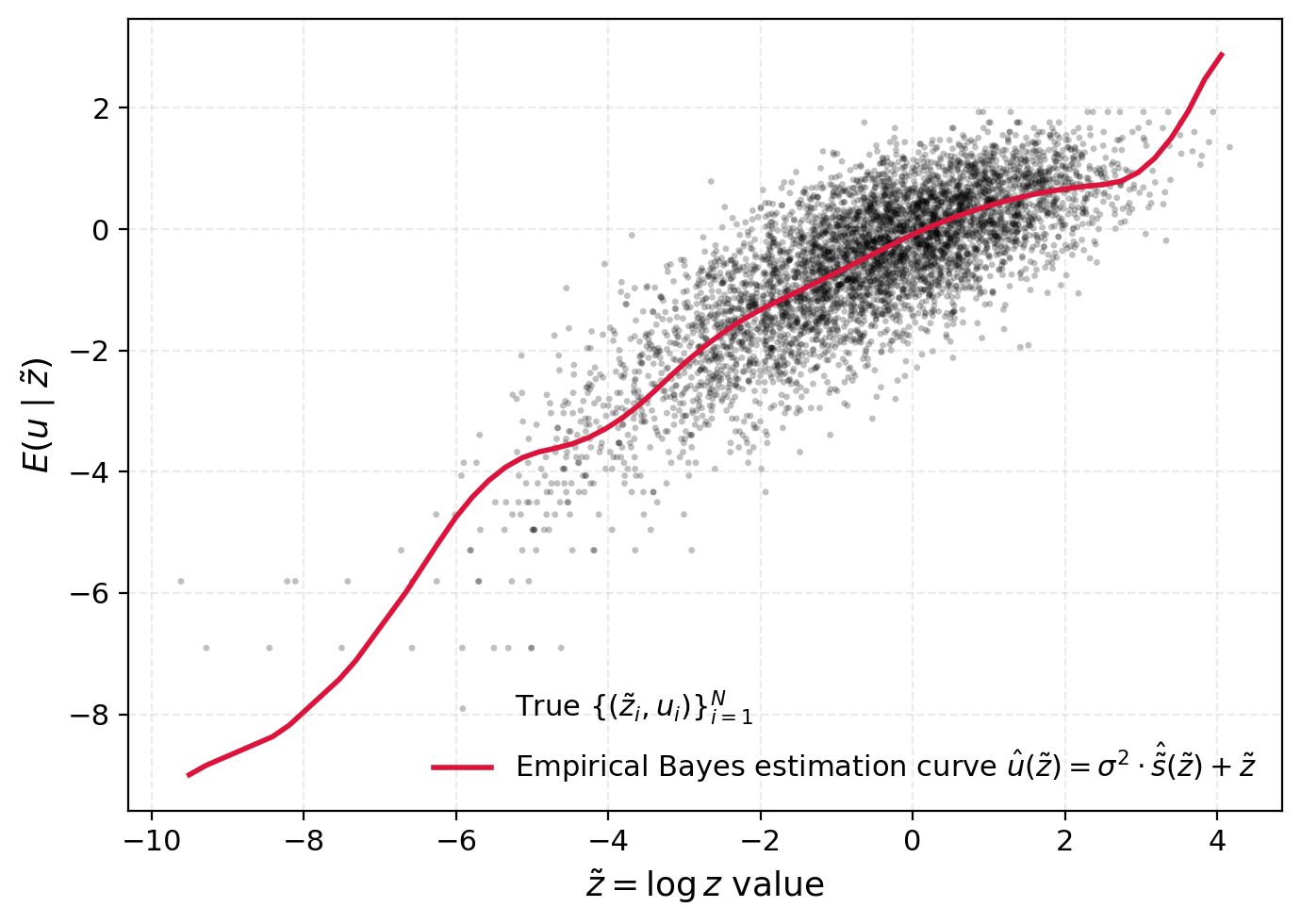}
        \caption{BM model in $(\log z)$-space}
        \label{fig:Bayes_BM_1}
    \end{subfigure}
\end{figure}

\quad We also consider directly applying Tweedie's formula in the $(\log z)$-space to estimate $u_i$. Let $\tilde{z}_i = \log z_i$ for all $i \in [N]$. Then, equivalently, we observe:
$$
\tilde{z}_i\sim \mathcal{N}(u_i,\sigma^2),\quad i=1,\cdots,N,
$$
and aim to estimate $u_i$. Therefore, we can apply Tweedie's formula for the Gaussian case in the $\tilde{z}$-space to obtain an estimate of $u_i$:
$$
\tilde{u}_i=\sigma^2\cdot \hat{\tilde{s}}(\tilde{z}_i)+\tilde{z}_i\quad \text{for}\ i=1,\cdots,N,
$$
where $\tilde{s}(\cdot)$ is the score function of the marginal of $\tilde{z}$ and $\hat{\tilde{s}}(\cdot)$ is an estimator to $\tilde{s}(\cdot)$. Estimating $\tilde{s}(\cdot)$ from $(\tilde{z}_i)_{i=1}^N$ is similar to the procedure described earlier for estimating $s(\cdot)$ from $(z_i)_{i=1}^N$. The estimation results of both models are presented in Figures~\ref{fig:Bayes_models}, \ref{fig:Bayes_models_0p5}, and \ref{fig:Bayes_models_1} for $\sigma=0.1, 0.5$, and $1.0$, respectively. We observe that when $\sigma$ is larger, the $z$ values can take more extremely small or large values due to the exponential Gaussian noise multiplied on $u$, which makes fitting a smooth spline directly in the $z$-space difficult and can lead to substantial estimation errors, particularly for large and small $z$. By contrast, in the $(\log z)$-space, Gaussian noise is added to $u$ and thus the points are more compactly distributed, making the spline easier to fit. Therefore, for relatively large values of $\sigma$ (e.g., $\sigma \geq 0.3$), it is preferable to use Tweedie’s formula based on the BM model rather than the GBM model for empirical Bayes estimation. On the other hand, for small $\sigma$ (e.g., $0 < \sigma \leq 0.3$), both BM and GBM models perform adequately. When $\sigma$ is very large (e.g., $\sigma \geq 2$), both models yield poor estimates for both large and small $z$ values due to the high noise variance.

\section{Experimental Details}\label{app:experiment}
\quad This section provides implementation details of the experiments reported in Section~\ref{sc4-1}.
Our score networks are parameterized using a U-Net architecture \cite{ronneberger2015u} with approximately $1$ million parameters, adapted from the implementation available at \url{https://colab.research.google.com/drive/120kYYBOVa1i0TD85RjlEkFjaWDxSFUx3?usp=sharing}. For training, we employ the Adam optimizer with a learning rate of $1\times 10^{-4}$, along with an exponential moving average with a decay rate of 0.9999.

\quad
For the MNIST image generation in Section~\ref{sec:mnist},
we choose the model parameters:
\begin{itemize}
\item
$\sigma(t) = 0.01+1.99t^{3/2}$, $\mu(t) = \sigma(t)^2/2$, $T = 1$, $a = 1.5$, and $b = 0.5$ for GBM (Figure~\ref{fig:generated_MNIST_GBM});
\item $\sigma(t)= 25^t$, $T=1$, $a=0$, and $b=1$ for VE (Figure~\ref{fig:generated_MNIST_BM});
\item
$\alpha(t)=0.05+4.95t$, $\mu(t)\equiv 1$, $T=1$, $a = 0.5$, and $b = 1.5$ for the CIR process (Figure~\ref{fig:generated_MNIST_CIR});
\item $\alpha(t)=0.05+9.95t$, $T=1$, $a=0$, and $b=1$ for VP (Figure~\ref{fig:generated_MNIST_VP}).
\end{itemize}
Samples are generated by the Euler--Maruyama scheme with 1000 uniform denoising steps.
For the financial data generation in Section~\ref{sec:fin}, we choose:
\begin{itemize}
\item $\sigma(t)=0.001+1.999t$, $\mu(t) = \sigma(t)^2/2-0.25$, and $T = 1$ for GBM (Figure~\ref{fig:gbm});
\item $\alpha(t)=0.05+1.575t$ and $T=1$ for VP (Figure~\ref{fig:vp}).
\end{itemize}
Samples are generated by Euler--Maruyama with 500 uniform denoising steps.

\bibliographystyle{abbrv}
\bibliography{unique}

\end{document}